\documentclass[a4paper,fleqn]{cas-sc}

\usepackage[authoryear,longnamesfirst]{natbib}

\usepackage[ruled]{algorithm2e}
\usepackage{appendix}
\usepackage{upgreek}
\usepackage{multirow}
\usepackage{amsfonts}
\usepackage{amsmath}
\usepackage{amsthm}
\usepackage{amssymb}
\usepackage{array}
\usepackage{bm}
\usepackage{caption}
\usepackage[color]{changebar}
\usepackage{datetime}
\usepackage{enumerate}
\usepackage{enumitem}
\usepackage{graphicx}
\usepackage[table]{xcolor}
\usepackage{booktabs}
\usepackage{caption}
\usepackage{subcaption}

\newcommand{\f}[0]{\textbf{f}}
\newcommand{\g}[0]{\textbf{g}}

\renewcommand{\alpha}{\upalpha}
\renewcommand{\beta}{\upbeta}
\renewcommand{\sigma}{\upsigma}
\renewcommand{\gamma}{\upgamma}
\renewcommand{\phi}{\upphi}
\renewcommand{\mu}{\upmu}
\renewcommand{\epsilon}{\upepsilon}
\renewcommand{\kappa}{\upkappa}
\renewcommand{\xi}{\upxi}
\renewcommand{\eta}{\upeta}
\renewcommand{\lambda}{\uplambda}
\renewcommand{\rho}{\uprho}
\renewcommand{\delta}{\updelta}
\renewcommand{\theta}{\uptheta}
\renewcommand{\upsilon}{\upupsilon}
\renewcommand{\vartheta}{\upvartheta}
\renewcommand{\nu}{\upnu}
\renewcommand{\omega}{\upomega}
\renewcommand{\zeta}{\upzeta}
\renewcommand{\varepsilon}{\upvarepsilon}
\renewcommand{\varphi}{\upvarphi}
\renewcommand{\psi}{\uppsi}

\def\tsc#1{\csdef{#1}{\textsc{\lowercase{#1}}\xspace}}
\tsc{WGM}
\tsc{QE}

\usepackage{algorithmic}

\newcommand{\STATEx}{\item[]}

\newtheorem{theorem}{Theorem}
\newtheorem{remark}{Remark}
\newtheorem{corollary}{Corrolary}

\begin{document}
\let\WriteBookmarks\relax
\def\floatpagepagefraction{1}
\def\textpagefraction{.001}

\shorttitle{}    

\shortauthors{}  

\title [mode = title]{Multi-output Gaussian process prediction of physical fields under linear equality constraints}  



%

\author[1,3]{Mahamat Hamdan Nassouradine} 

\cormark[1]


\ead{mahamat-hamdan.nassouradine@cea.fr}



\affiliation[1]{organization={Université Paris-Saclay, CEA, Service de Génie Logiciel pour la Simulation},
            city={Gif-sur-Yvette},
            postcode={91191}, 
            country={France}}

\author[1]{Clément Gauchy}


\ead{clement.gauchy@cea.fr}



\author[2]{Pierre-Emmanuel Angeli}
\ead{pierre-emmanuel.anglei@cea.fr}
\author[3]{Sébastien {Da Veiga}}
\ead{sebastien.da-veiga@ensai.fr}
\affiliation[2]{organization={Université Paris-Saclay, CEA, Service de Thermohydraulique et de Mécanique des Fluides},
            city={Gif-sur-Yvette},
            postcode={91191}, 
            country={France}}
\affiliation[3]{organization={Univ Rennes, Ensai, CNRS, CREST - UMR 9194},
            city={Rennes},
            postcode={F-35000}, 
            country={France}}

\cortext[1]{Corresponding author}



\begin{abstract}
We address the simultaneous prediction of multiple high-dimensional physical fields governed by linear equality constraints, a setting that arises in many real-world applications in physics machine learning. Gaussian process (GP) regression is a widely used surrogate modeling approach due to its effectiveness in small-sample regimes and its ability to provide uncertainty quantification. However, applying GP models in this setting raises two major challenges:
the high dimensionality of the discretized output fields and the enforcement of the physical constraint in predictions. For the latter, a common strategy consists in deducing one output from the others via the constraint relation. Through an empirical benchmark, we show that this deductive approach is sensitive to the arbitrary choice of which output to deduce, affecting both predictive accuracy and uncertainty quantification. Consequently, there is a need for an approach that treats all fields symmetrically while strictly respecting the underlying physics. Motivated by these limitations, we propose a robust framework for jointly modeling constrained multi-field data. Our approach first leverages a specific PCA procedure for multi-field data, coined row-wise PCA, which has the interesting property of preserving the constraint in the latent space. Since standard PCA strategies for multi-field data (field-wise, column-wise) do not preserve such constraints, we investigate theoretically the optimality of the row-wise choice and provide conditions under which it incurs a negligible reconstruction cost. In a second step, we consider a linearly-constrained multi-output GP approach based on a specific kernel parametrization which is trained on the latent space of row-wise PCA. The proposed framework is validated on a population dynamics problem and on an industrial computational fluid dynamics application, which involves the prediction of Reynolds stress tensor components under the incompressibility constraint. We demonstrate competitive predictive performance while guaranteeing strict constraint satisfaction.
\end{abstract}



\begin{keywords}
Metamodeling \sep linear equality constraints \sep physical fields \sep Gaussian process \sep PCA
\end{keywords}

\maketitle

\section{Introdution}
\label{sec:intro}
Metamodeling is now a fundamental approach for approximating computationally expensive numerical simulations or costly experiments \citep{OHagan2006, Kennedy2000, Santner2003}. This technique is essential for various engineering applications, such as structural optimization \citep{Jansson2003}, aeronautics \citep{SobieszczanskiHaftka1997}, multifidelity optimization \citep{Forrester2007, Forrester2008}, sensitivity analysis \citep{Nanty2016}, or reliability analysis \citep{SuPengHu2017, LiSadoughi2020}. In this work, we specifically focus on building a metamodel that simultaneously predicts multiple spatial or temporal fields governed by linear equality constraints, which are ubiquitous in physics-based simulations. Gaussian process (GP) regression, also known as kriging in geostatistics \citep{Matheron1963}, has established itself as a method of choice in metamodeling context, thanks to its flexibility in capturing nonlinear relationships, its principled quantification of predictive uncertainty, and its effectiveness in small-sample regimes \citep{RasmussenWilliams2005}. This approach has been successfully applied to approximate scalar outputs across various domains: uncertainty quantification \citep{OHagan2006}, thermohydraulic studies in nuclear safety \citep{MarrelChabridon2022}, or magnetic field reconstruction in physics \citep{MacedoCastro2010}. To capture dependencies across multiple scalar outputs, this framework has been naturally extended to Multi-Output Gaussian Processes (MOGP) using structured covariance models such as the linear model of coregionalization (LCM) \citep{GoulardVoltz1992, Rougier2008}, the intrinsic coregionalization model (ICM), or process convolutions \citep{Higdon2002, AlvarezRosascoLawrence2012}.

In many physical applications of interest, however, the output fields are subject to linear equality constraints: for instance, an incompressibility condition on the velocity field, or a conservation law imposing that species mass fractions sum to unity at every point. Accounting for these constraints is crucial: first, it guarantees physically consistent predictions. Second, we expect that accounting for the constraints will produce a more accurate metamodel.  When such constraints are of the form $\sum_{j=1}^Q \alpha_j(\mathbf{x}) f_j(\mathbf{x}) = c(\mathbf{x})$, a seemingly straightforward strategy consists in exploiting the constraint algebraically: one output is deduced from the others via the constraint relation, and a standard (unconstrained) GP surrogate is built only on the remaining $Q-1$ fields. We refer to this as the deductive approach. It is appealing because it allows the direct reuse of any existing unconstrained framework and at first sight guarantees constraint satisfaction on the predictive mean by construction. However, this approach may suffer from a fundamental structural limitation: the treatment of the outputs becomes arbitrary, since the deduced field is not modeled jointly with the others but is instead recovered a~posteriori through the constraint relation. As a consequence, the deduced output accumulates the prediction errors of all modeled fields, and its predictive quality depends on the arbitrary choice of the deduced output. This phenomenon is closely related to the well-known dependence on the reference category in the additive log-ratio  transformation of compositional data analysis \citep{Aitchison1986}, where the statistical properties of the transformed variables are sensitive to the choice of the component used as denominator.

The structural limitations of this deductive approach motivate the search for methods that integrate constraints directly into the GP model. Several methods have been developed for integrating constraints into
GP regression~\citep{Swiler2020}. For inequality constraints, approaches include data augmentation \citep{AbrahamsenBenth2001}, finite-dimensional projections \citep{DaVeigaMarrel2020, MaatoukBay2017}, and shape constraint methods \citep{WangBerger2016}. For equality constraints, early work on linear forms dates back to \citet{Graepel2003}, leading to more general approaches that encode constraints directly in the kernel by restricting the GP to the solution space of a linear operator \citep{Jidling2017, LangeHegermann2018}. The parametrization approach of \citet{Jidling2017}, in particular, constructs a covariance kernel such that every sample path of the GP satisfies the constraint over the entire domain, thus ensuring constraint satisfaction on both the predictive mean and posterior samples. More recently, \citet{Pilar2023} specifically addressed linear and non linear constraints in multitask GPs. However, these constrained GP methods have been developed and demonstrated for problems with low-dimensional outputs, and they inherit the cubic scaling of full GP inference $\mathcal{O}(N^3 P^3)$ or $\mathcal{O}(N^3 + P^3)$ in favorable cases \citep{Stegle2011}, where $N$ is the sample size and $P$ the output dimension, which precludes their direct application to high-dimensional discretized fields.

To overcome the dimensional limitation in general case, a widely adopted strategy in the GP literature combines a dimensionality reduction step with GP regression in the resulting low-dimensional latent space. The seminal contribution of \citet{Higdon2008} proposed applying Principal Component Analysis (PCA) to the output fields and training independent GPs on the principal component scores. More recently, \citet{Xing2021} extended this framework by independently reducing the dimension of each individual field and coupling these representations with MOGP through the LCM kernel, enabling the joint modeling of several correlated fields in high dimensions. Other strategies have explored nonlinear alternatives such as Isomap-GP or KPCA-GP \citep{Xing2015, ZhouPeng2020}, developed extended formulations such as the extended LCM \citep{Wang2022ELMC} that use invertible neural networks to capture spatial nonlinearities, or introduced higher-order GPs \citep{Zhe2019} exploiting Kronecker product properties to further reduce computational cost. We also mention the PCA-GP emulators of \citet{Herman2020} for predicting high-dimensional microstructure morphology fields, and the efficient surrogate of \citet{Mak2018} for large eddy simulations, which also relies on dimensionality reduction combined with GP regression. While all these methods successfully address the dimensionality challenge, they do not naturally guarantee compliance with linear equality constraints unless they are embedded within a deductive approach, which, as discussed above, introduces its own structural limitations.

In this paper, we propose an approach that addresses both the dimensionality and the constraint challenges in a unified framework. The key idea is to perform dimensionality reduction in a way that  {preserves the constraint structure} in the latent space, so that constrained GP regression can then be applied in this reduced space. We focus on PCA as the dimensionality reduction tool, since nonlinear alternatives are both less robust in small-sample regimes and incompatible with the linear constraint structure. For multi-field data, PCA admits several formulations depending on how the component fields are arranged before decomposition. In the geoscience and spatio-temporal statistics literature, this extension of PCA to vector fields is known as multivariate Empirical Orthogonal Functions (EOFs) \citep{Preisendorfer1988, Hannachi2007, Sparnocchia2003}, with applications in climatology \citep{AlveraAzcarate2007}, oceanography \citep{Liang2018}, and ecology \citep{Petitgas2014}. Two main global formulations exist \citep{Preisendorfer1988}: the  {time-modulation form} (TMF, or column-wise PCA), which concatenates the fields column-wise and captures inter-field dependencies in the spatial dimension, and the  {space-modulation form} (SMF, or row-wise PCA), which stacks the fields row-wise and produces a shared spatial basis. In contrast to these global methods, a third option consists in applying the reduction independently to each component. Following \citet{Xing2021}, we formally refer to this strategy as {field-wise} PCA. Theoretical underpinnings for such decoupled representations of vector fields can also be found in functional data analysis \citep{HappGreven2018} and generalized Karhunen--Lo{è}ve expansions \citep{Perrin2013}. Among these strategies, the row-wise approach has a distinctive property: it preserves linear equality constraints exactly in the latent space, regardless of the truncation level. This makes it uniquely suited for coupling with a constrained MOGP model. We propose the row-wise PCA-Constrained MOGP framework (coined \texttt{Row-CMO}), which combines the row-wise PCA with constrained MOGP regression based on the parametrization approach of \citet{Jidling2017}, ensuring that both the predictive mean and posterior samples satisfy the physical constraint to machine precision.
The column-wise and field-wise PCA strategies, which do not preserve the constraint after truncation, can only be used within a deductive approach for constrained problems. A natural question then arises: what is the reconstruction cost of choosing row-wise PCA over these alternatives ? To address this, we provide a theoretical analysis comparing the three strategies from the perspective of reconstruction error. 
The main contributions of this paper are the following:
\begin{enumerate}
    \item We propose the \texttt{Row-CMO} framework for predicting multi-field outputs under linear equality constraints.

    \item Through empirical benchmarking, we demonstrate the inherent heterogeneity of deductive approaches, showing that performance can degrade significantly depending on the arbitrary choice of the deduced output. 

    \item We provide a theoretical analysis of multi-field PCA strategies (field-wise, column-wise, and row-wise) by bounding the excess reconstruction error associated with global methods. Based on classical perturbation theory, our analysis identifies inter-field spectral heterogeneity as the sole factor driving this discrepancy.
    \item We validate \texttt{Row-CMO} on two case studies: the Lotka--Volterra population dynamics system \citep{Wangersky1978}, subject to a sum-to-zero conservation constraint, and a Computational Fluid Dynamics (CFD) application predicting Reynolds stress tensor components on the Buice--Eaton 2D diffuser \citep{BuiceEaton2000}, governed by the incompressibility constraint.
\end{enumerate}
The paper is organized as follows. Section~\ref{sec:problem} establishes the problem statement and recalls the necessary background on Gaussian process regression. Building on this framework, Section~\ref{sec:constraint} exposes the vulnerabilities of the deductive approach to constraint enforcement. The core theoretical analysis of the different multi-field PCA strategies is then presented in Section~\ref{sec:pca_theory}. These theoretical insights and the proposed model are subsequently validated through numerical experiments in Section~\ref{sec:experiments}. 
All the numerical experiments, except for the CFD application, are available at \url{https://github.com/nasrmht/paper_benchmark_repo}.


\section{Problem statement}\label{sec:problem}

\subsection{Problem formulation}\label{sec:formulation}

We consider the output of a parameterized numerical simulator  that produces $Q$ coupled scalar fields, each evaluated over a fixed set of $S$ discretes coordinates (e.g spatial locations or time steps). The simulator is governed by a vector of input parameters $\mathbf{x} \in \mathcal{X} \subset \mathbb{R}^D$, which may represent physical constants, boundary conditions, or model closure coefficients. For a given input $\mathbf{x}^{(i)}$, the simulator returns $Q$ discretized outputs $\mathbf{y}_1^{(i)}, \ldots, \mathbf{y}_Q^{(i)}$, where each $\mathbf{y}_j^{(i)} = (u_j(\nu_1, \mathbf{x}^{(i)}), \ldots, u_j(\nu_S, \mathbf{x}^{(i)}))^\top \in \mathbb{R}^S$ collects the values of the $j$-th scalar field $u_j$ evaluated at the fixed coordinates  $\nu_1, \ldots, \nu_S$. We denote by $\mathbf{y}^{(i)} = [\mathbf{y}^{{(i)}\top}_1, \ldots, \mathbf{y}^{{(i)}\top}_Q]^\top \in \mathbb{R}^P$, with $P = Q \times S$, the concatenated output vector.
We assume that a training set of $N$ simulation runs is available:
$  \bigl(\mathbf{x}^{(i)},\, \mathbf{y}^{(i)} = [\mathbf{y}^{(i)\top}_1, \ldots, \mathbf{y}^{(i)\top}_Q]^\top\bigr)_{1 \leq i \leq N},
$
where $N$ is typically small (on the order of tens to hundreds) due to the computational cost of each simulation. 
The data are assumed noise-free,
as is the case for outputs of deterministic numerical simulators.  Our goal is to approximate the mapping
$\mathbf{f} = [\mathbf{f}_1^\top, \ldots, \mathbf{f}_Q^{\top}]^\top:
\mathcal{X} \to \mathbb{R}^P$ such that
\begin{equation}
\mathbf{y}^{(i)} = \mathbf{f}(\mathbf{x}^{(i)}),  
\label{eq:model}
\end{equation}
$i = 1, \ldots, N$.
We further assume that the fields satisfy a linear equality constraint of the form
\begin{equation}\label{eq:constraint}
  \mathcal{F}[\mathbf{f}(\mathbf{x})] = \sum_{j=1}^Q \alpha_j(\mathbf{x}) \, \mathbf{f}_j(\mathbf{x}) = \mathbf{c}(\mathbf{x})
\end{equation}
for all $\mathbf{x} \in \mathcal{X}$, where $\alpha_1(\cdot), \ldots, \alpha_Q(\cdot)$ are known, non-zero input-dependent coefficients and $\mathbf{c} : \mathcal{X} \to \mathbb{R}^S$ is a known constraint function.
For example, for two fields constrained to sum to 1 on each discrete coordinate, we have $\alpha_1(\mathbf{x})=\alpha_2(\mathbf{x})=1$ and  $\mathbf{c}=1_{\mathbb{R}^S}$ for all $\mathbf{x}\in\mathcal{X}$.
We focus first on the case where the constraint coefficients $\alpha_j$ are {independent} of the input $\mathbf{x}$ and the second member is zero ($\mathbf{c}(\mathbf{x}) = \mathbf{0}$). Under these assumptions, the constraint reads
\begin{equation}\label{eq:constraint_zero}
  \sum_{j=1}^Q \alpha_j \, \mathbf{f}_j(\mathbf{x}) = \mathbf{0}_S, \quad \forall \, \mathbf{x} \in \mathcal{X}.
\end{equation}
The generalizations to input-dependent coefficients $\alpha_j(\mathbf{x})$ and non-zero second member $\mathbf{c}(\mathbf{x}) \neq \mathbf{0}$ are discussed in Section~\ref{sec:generalization}, while the
extension to noisy observations is discussed in
Section~\ref{sec:noisy_data}.
\subsection{Gaussian process regression}\label{sec:gp}

We adopt Gaussian process regression as our modeling tool for functions $\f_1,\ldots, \f_Q$, given its well-established effectiveness in small-sample regimes and its principled uncertainty quantification \citep{RasmussenWilliams2005}. We briefly recall the single-output and multi-output formulations, which serve both as building blocks for our framework and as components of the competing methods in our benchmark.

\subsubsection{Single-output Gaussian process}\label{sec:sogp}

Given a training set $(\mathbf{x}^{(i)}, z^{(i)})_{1 \leq i \leq N} \subset \mathbb{R}^D \times \mathbb{R}$ with $z^{(i)} = f(\mathbf{x}^{(i)})$, a GP prior
$f(\mathbf{x}) \sim \mathcal{GP}(\mu(\mathbf{x}), k(\mathbf{x}, \mathbf{x}'))$ is placed on $f$. Conditioning on the observations, the posterior distribution at a test point $\mathbf{x}_*$ is Gaussian, $f_* \mid \mathbf{X}, \mathbf{z}, \mathbf{x}_* \sim \mathcal{N}\bigl(\bar f_*, \mathrm{Var}(f_*)\bigr)$, with
\begin{equation}
\bar f_* = \mu(\mathbf{x}_*) + \mathbf{k}_*^\top \mathbf{K}^{-1} (\mathbf{z} - \boldsymbol{\mu}), \qquad
\mathrm{Var}(f_*) = k(\mathbf{x}_*, \mathbf{x}_*) - \mathbf{k}_*^\top \mathbf{K}^{-1} \mathbf{k}_*,
\end{equation}
where $\mathbf{z} = [z_1, \ldots, z_N]^\top$,
$\boldsymbol{\mu} = [\mu(\mathbf{x}_1), \ldots, \mu(\mathbf{x}_N)]^\top$, $[\mathbf{K}]_{ij} = k(\mathbf{x}^{(i)}, \mathbf{x}^j)$, and $\mathbf{k}_* = [k(\mathbf{x}^1, \mathbf{x}_*), \ldots, k(\mathbf{x}^N,
\mathbf{x}_*)]^\top$. The kernel hyperparameters are estimated by maximizing the marginal log-likelihood
\citep{RasmussenWilliams2005}. In practice, a small nugget
$\sigma^2 \mathbf{I}_N$ is added to $\mathbf{K}$ for numerical stability.
\subsubsection{Multi-output Gaussian process}\label{sec:mogp}
 
Now consider the multi-output regression task of learning a vector-valued function $\mathbf{f}: \mathcal{X} \subset \mathbb{R}^D \to \mathbb{R}^Q$ from a training set $(\mathbf{x}^{(i)}, \mathbf{z}^{(i)} = [z_1^{(i)}, \ldots, z_Q^{(i)}]^\top)_{1 \leq i \leq N}$ with $z_q^{(i)} = f_q(\mathbf{x}^{(i)})$. A multi-output GP prior is placed on $\mathbf{f}$:
\[
  \mathbf{f}(\mathbf{x}) \sim \mathcal{GP}\bigl(\boldsymbol{\mu}(\mathbf{x}),\, \mathbf{k}(\mathbf{x}, \mathbf{x}')\bigr),
\]
where $\boldsymbol{\mu} : \mathcal{X} \to \mathbb{R}^Q$ is a vector-valued mean function and $\mathbf{k} : \mathcal{X} \times \mathcal{X} \to \mathbb{R}^{Q \times Q}$ is a matrix-valued covariance kernel with entries $[\mathbf{k}(\mathbf{x}, \mathbf{x}')]_{qq'} = \text{Cov}[f_q(\mathbf{x}), f_{q'}(\mathbf{x}')]$. 
Denoting by $\mathbf{Z} = [\mathbf{z}_1^\top, \ldots,
\mathbf{z}_N^\top]^\top \in \mathbb{R}^{NQ}$ the concatenated observations, the posterior at a test point $\mathbf{x}_*$ is Gaussian with mean and covariance
\begin{equation}
\begin{aligned}
    \bar{\mathbf{f}}_* &= \boldsymbol{\mu}(\mathbf{x}_*) + \mathbf{k}(\mathbf{X},
\mathbf{x}_*)^\top \mathbf{k}(\mathbf{X}, \mathbf{X})^{-1}
(\mathbf{Z} - \boldsymbol{\mu}(\mathbf{X})),\\
\mathrm{Cov}(\mathbf{f}_*) &= \mathbf{k}(\mathbf{x}_*, \mathbf{x}_*) -
\mathbf{k}(\mathbf{x}_*, \mathbf{X}) \mathbf{k}(\mathbf{X}, \mathbf{X})^{-1}
\mathbf{k}(\mathbf{X}, \mathbf{x}_*),
\end{aligned}
\end{equation}
where $\mathbf{k}(\mathbf{X}, \mathbf{X}) \in \mathbb{R}^{NQ \times NQ}$
is the block covariance matrix.  
A popular class of matrix-valued kernels is the linear model of coregionalization (LCM)~\citep{GoulardVoltz1992, Rougier2008}, which expresses $\mathbf{k}$ as a sum of separable terms:
\begin{equation}\label{eq:lcm}
  \mathbf{k}(\mathbf{x}, \mathbf{x}') = \sum_{r=1}^R k_r(\mathbf{x}, \mathbf{x}')\, \mathbf{B}_r,
\end{equation}
where each $k_r(\cdot, \cdot)$ is a scalar kernel and each $\mathbf{B}_r \in \mathbb{R}^{Q \times Q}$ is a positive semi-definite coregionalization matrix capturing inter-output dependencies at the $r$-th scale. The special case $R = 1$ reduces to the intrinsic coregionalization model (ICM): $\mathbf{k}(\mathbf{x}, \mathbf{x}') = k(\mathbf{x}, \mathbf{x}')\, \mathbf{B}$. Hyperparameters, including the kernel parameters and the entries of $\mathbf{B}_r$, are estimated by maximizing the marginal log-likelihood
\begin{equation}\label{eq:mll}
 \log p(\mathbf{Z} \mid \mathbf{X}, \boldsymbol{\theta}) = -\tfrac{1}{2}
\mathbf{Z}^\top \mathbf{k}(\mathbf{X}, \mathbf{X})^{-1} \mathbf{Z}
- \tfrac{1}{2} \log\bigl|\mathbf{k}(\mathbf{X}, \mathbf{X})\bigr|
- \tfrac{NQ}{2} \log 2\pi,
\end{equation}
using gradient-based optimization \citep{Nocedal2006}. For more details on the choice of kernel structure and discussion about 
multi-output GPs, 
we refer the reader to \citet{AlvarezRosascoLawrence2012}.
 
\subsection{Multi-output GP regression under linear equality constraints}\label{sec:constrained_gp}
 
We now recall how linear equality constraints can be enforced within the MOGP framework using the parametrization approach proposed by \citet{Jidling2017}. Consider a multi-output function $\mathbf{f} : \mathcal{X} \to \mathbb{R}^Q$ subject to the linear constraint
\begin{equation}\label{eq:gp_constraint}
  \mathcal{L}[\mathbf{f}](\mathbf{x}) = \sum_{j=1}^Q \alpha_j f_j(\mathbf{x}) = 0, \quad \forall\, \mathbf{x} \in \mathcal{X},
\end{equation}
which can be written in vector form as $\boldsymbol{\alpha}^\top \mathbf{f}(\mathbf{x}) = 0$ with $\boldsymbol{\alpha} = [\alpha_1, \ldots, \alpha_Q]^\top$. Considering a MOGP prior on $\mathbf{f}$, the problem amounts to imposing the linear constraint on a vector-valued Gaussian process $\mathbf{f} \sim \mathcal{GP}(\boldsymbol{\mu}_f(\mathbf{x}),\, \mathbf{k}_f(\mathbf{x}, \mathbf{x}'))$.
 
The parametrization approach constrains the search space of $\mathbf{f}$ to the set of functions satisfying \eqref{eq:gp_constraint} by expressing $\mathbf{f}$ as a linear transformation of an unconstrained function. Specifically, one assumes that $\mathbf{f}$ is related to another function $\mathbf{g} : \mathcal{X} \to \mathbb{R}^\ell$ through a linear operator $\mathcal{G}$:
\begin{equation}\label{eq:param_operator}
  \mathbf{f}(\mathbf{x}) = \mathcal{G}_{\mathbf{x}}[\mathbf{g}].
\end{equation}
With this formulation, the constraint on $\mathbf{f}$ becomes $\mathcal{L}[\mathcal{G}_{\mathbf{x}}[\mathbf{g}]] = 0$, and we require this to hold for any function $\mathbf{g}$, thereby transferring the restriction from $\mathbf{f}$ to the operator $\mathcal{G}$ itself. Since both $\mathcal{L}$ and $\mathcal{G}$ are linear, they can be represented by matrices in our setting. Let $\mathbf{C}_\mathcal{L} \in \mathbb{R}^{Q \times \ell}$ denote the matrix representation of $\mathcal{G}$. The relation \eqref{eq:param_operator} becomes
\begin{equation}\label{eq:param}
  \mathbf{f}(\mathbf{x}) = \mathbf{C}_\mathcal{L}\, \mathbf{g}(\mathbf{x})
\end{equation}
and the constraint~\eqref{eq:gp_constraint} holds for any $\mathbf{g}$ if and only if
\begin{equation}\label{eq:param_cond}
  \boldsymbol{\alpha}^\top \mathbf{C}_\mathcal{L} = \mathbf{0}_\ell^\top.
\end{equation}
Since Gaussian processes are stable under linear transformations, placing a GP prior $\mathbf{g} \sim \mathcal{GP}(\boldsymbol{\mu}_g,\, \mathbf{k}_g)$ on the unconstrained function, where $\mathbf{k}_g : \mathcal{X} \times \mathcal{X} \to \mathbb{R}^{\ell \times \ell}$ is a matrix-valued kernel, induces a valid GP prior on $\mathbf{f}$, with mean and covariance
\begin{equation}\label{eq:param_prior}
  \boldsymbol{\mu}_F(\mathbf{x}) = \mathbf{C}_\mathcal{L}\, \boldsymbol{\mu}_g(\mathbf{x}), \qquad
  \mathbf{k}_f(\mathbf{x}, \mathbf{x}') = \mathbf{C}_\mathcal{L}\, \mathbf{k}_g(\mathbf{x}, \mathbf{x}')\, \mathbf{C}_\mathcal{L}^\top.
\end{equation}
By construction, every realization of $\mathbf{f}$ lies in the column space of $\mathbf{C}_\mathcal{L}$, which is contained in the null space of $\boldsymbol{\alpha}^\top$, and therefore satisfies the constraint exactly. Crucially, this property is inherited by the posterior distribution: both the posterior mean and any sample drawn from the posterior of $\mathbf{f}$ satisfy~\eqref{eq:gp_constraint} at every point in $\mathcal{X}$.
 
\begin{remark}
The parametrization approach is presented here for the specific case of a single linear equality constraint with constant coefficients. It applies, however, in a more general framework involving multiple constraints defined by a matrix-valued linear operator $\mathcal{L}$. See \citet{LangeHegermann2018} for a systematic algorithmic construction of the parametrization using Gr\"{o}bner bases. In that setting, $\mathbf{C}_\mathcal{L}$ is chosen in the intersection of the null spaces of the rows of $\mathcal{L}$. 
\end{remark}
 
 \paragraph{Solution space and hyperparameter estimation}
The parametrization matrix $\mathbf{C}_\mathcal{L}$ is not unique: any matrix whose columns lie in $\ker(\boldsymbol{\alpha}^\top)$ is valid. For linear equality constraint~\eqref{eq:gp_constraint}, $\ker(\boldsymbol{\alpha}^\top)$ has dimension $Q - 1$. Since we assumed $\alpha_j \neq 0$, the set of admissible matrices is
\begin{equation}\label{eq:solution_set}
  \mathcal{S}_\ell = \Bigl\{\mathbf{C}_\mathcal{L} \in \mathbb{R}^{Q \times \ell} \;:\; C_{\mathcal{L},1,r} = -\frac{1}{\alpha_1}\sum_{j=2}^Q \alpha_j C_{\mathcal{L},j,r},\;\; r = 1, \ldots, \ell \Bigr\}.
\end{equation}
Each column of $\mathbf{C}_\mathcal{L}$ has $Q - 1$ free entries (the first one being determined by the constraint). Crucially, unlike deductive approaches, this construction is stable with respect to the choice of the constrained entry. Consequently, the non-uniqueness of $\mathbf{C}_\mathcal{L}$ introduces free parameters that are estimated jointly with the kernel hyperparameters of $\mathbf{g}$ by maximizing the marginal log-likelihood~\eqref{eq:mll}. This lets the optimization select the parametrization that best fits the data while guaranteeing constraint satisfaction by construction.

\paragraph{Choice of kernel for $\mathbf{g}$}
A natural choice is to model the $\ell$ components of $\mathbf{g}$ as independent
scalar GPs, each with its own kernel $k_r(\mathbf{x},\mathbf{x}')$, so that
$\mathbf{k}_g(\mathbf{x},\mathbf{x}') = \mathrm{diag}\big(k_1(\mathbf{x},\mathbf{x}'),\dots,k_\ell(\mathbf{x},\mathbf{x}')\big)$.
Substituting into~\eqref{eq:param_prior}, the induced kernel on $\mathbf{f}$ reads
\begin{equation}
  \mathbf{k}_f(\mathbf{x},\mathbf{x}')
  = \mathbf{C}_\mathcal{L}\,
    \mathrm{diag}\big(k_1(\mathbf{x},\mathbf{x}'),\dots,k_\ell(\mathbf{x},\mathbf{x}')\big)\,
    \mathbf{C}_\mathcal{L}^\top ,
\end{equation}
where the constraint matrix $\mathbf{C}_\mathcal{L}\in\mathbb{R}^{Q\times\ell}$
satisfies $\mathbf{C}_\mathcal{L}\in\mathcal{S}_\ell$, as defined in~\eqref{eq:solution_set}.
Instead of placing independent priors on the components of $\mathbf{g}$, here we consider a linear model of coregionalization, to allow a richer prior structure on $\mathbf{f}$. Following~\eqref{eq:lcm},
we set
\begin{equation}
  \mathbf{k}_g(\mathbf{x},\mathbf{x}')
  = \sum_{r=1}^{R} k_r(\mathbf{x},\mathbf{x}')\,\mathbf{B}_r ,
\end{equation}
where each $k_r$ is a scalar kernel and each $\mathbf{B}_r\in\mathbb{R}^{\ell\times\ell}$
is a positive semi-definite coregionalization matrix. Substituting this LCM prior
into~\eqref{eq:param_prior} yields the constrained kernel on $\mathbf{f}$
\begin{equation} \label{eq:constrained_lcm}
  \mathbf{k}_f(\mathbf{x},\mathbf{x}')
  = \mathbf{C}_\mathcal{L}
    \Big(\textstyle\sum_{r=1}^{R} k_r(\mathbf{x},\mathbf{x}')\,\mathbf{B}_r\Big)
    \mathbf{C}_\mathcal{L}^\top
  = \sum_{r=1}^{R} k_r(\mathbf{x},\mathbf{x}')\,
    \mathbf{C}_\mathcal{L}\mathbf{B}_r\mathbf{C}_\mathcal{L}^\top .
\end{equation}
The parametrization thus acts solely on the coregionalization matrices of the
LCM, replacing each $\mathbf{B}_r\in\mathbb{R}^{\ell\times\ell}$ by
$\mathbf{C}_\mathcal{L}\mathbf{B}_r\mathbf{C}_\mathcal{L}^\top\in\mathbb{R}^{Q\times Q}$,
while the scalar kernels $k_r$ remain free. Each
$\mathbf{C}_\mathcal{L}\mathbf{B}_r\mathbf{C}_\mathcal{L}^\top$ is positive
semi-definite and its columns lie in $\ker(\boldsymbol{\alpha}^\top)$, so the
constraint is encoded entirely through the coregionalization structure. For estimation by maximum likelihood~\eqref{eq:mll}, in practice we parametrize each $\mathbf{C}_\mathcal{L}\mathbf{B}_r\mathbf{C}_\mathcal{L}^\top= \widetilde{\mathbf{W}}_r\widetilde{\mathbf{W}}_r^\top \succeq 0 $ with a single matrix $\widetilde{\mathbf{W}}_r \in \mathcal{S}_\ell$, where each column of $\widetilde{\mathbf{W}}_r$ is constrained to be in $\ker(\boldsymbol{\alpha}^\top)$.



\paragraph{Illustrative example}
To illustrate the construction, consider the case $Q = 2$ with the constraint $\alpha_1 f_1(\mathbf{x}) + \alpha_2 f_2(\mathbf{x}) = 0$ and $\ell = 1$. So $\mathbf{C}_\mathcal{L} \in \mathbb{R}^{2 \times 1}$ and takes the form $\mathbf{C}_\mathcal{L} = [-\alpha_2/\alpha_1,\; 1]^\top$. Placing a scalar GP prior $g \sim \mathcal{GP}(\mu_g, k_g)$, the induced kernel on $\mathbf{f}$ is
\begin{equation}\label{eq:example_kernel}
  \mathbf{k}_f(\mathbf{x}, \mathbf{x}') = \mathbf{C}_\mathcal{L}\mathbf{C}_\mathcal{L}^\top k_g(\mathbf{x}, \mathbf{x}') = \begin{bmatrix} \alpha_2^2/\alpha_1^2 & -\alpha_2/\alpha_1 \\[2pt] -\alpha_2/\alpha_1 & 1 \end{bmatrix} k_g(\mathbf{x}, \mathbf{x}').
\end{equation}
For instance, with $\alpha_1 = 1,\, \alpha_2 = 1$ (i.e., $f_1 + f_2 = 0$), one recovers $\mathbf{k}_f(\mathbf{x}, \mathbf{x}') = \bigl[\begin{smallmatrix} 1 & -1 \\ -1 & 1 \end{smallmatrix}\bigr] k_g(\mathbf{x}, \mathbf{x}')$. Any prior or posterior sample from this model satisfies $f_1(\mathbf{x}) + f_2(\mathbf{x}) = 0$ everywhere. In comparison, an unconstrained ICM model where $\mathbf{k}_f(\mathbf{x}, \mathbf{x}') = \mathbf{B} \,k(\mathbf{x}, \mathbf{x}') $ would estimate the coregionalization matrix $\mathbf{B}$ from data. While the estimated matrix may converge to the same structure after optimization, it can never guarantee exact constraint satisfaction, even when the training data are noise-free.
\section{Linear equality constraints and deductive approaches}\label{sec:constraint}
As discussed in the introduction, a natural strategy for enforcing a linear equality constraint $\sum_{j=1}^Q \alpha_j f_j(\mathbf{x}) = 0$ consists in selecting one output index $l \in \{1, \ldots, Q\}$ and deducing its prediction from the others via the constraint relation:
\begin{equation}\label{eq:deduction}
  \hat{f}_{l}(\mathbf{x}) = -\frac{1}{\alpha_{l}} \sum_{j \neq l} \alpha_j \hat{f}_j(\mathbf{x}),
\end{equation}
where $\hat{f}_j(\mathbf{x})$ denotes the prediction of the $j$-th output by an unconstrained surrogate model trained on the $Q - 1$ remaining outputs. This deductive approach can be combined with any standard multi-output regression method, including independent GPs or LCM-based MOGPs, without any modification to their training procedure. Constraint satisfaction on the predictive mean is guaranteed by construction through~\eqref{eq:deduction}. Moreover, posterior samples can also be made constraint-compliant: ones sample jointly from the posterior of the $Q - 1$ modeled outputs and recovers the deduced output via~\eqref{eq:deduction} applied to each sample.
However, this strategy introduces an arbitrary choice across outputs: the deduced output $ l$ is not modeled from data but recovered algebraically, which means its prediction error accumulates the individual errors of all modeled outputs. The choice of $ l$ is a priori arbitrary, and different choices may lead to substantially different predictive performances. In this section, we investigate these effects empirically through a  benchmark that compares the deductive approach, under all possible deduction scenarios, against the constrained MOGP (CMOGP) presented in Section~\ref{sec:constrained_gp}, which models all outputs jointly and encodes the constraint in its covariance structure.

\subsection{Benchmark setup}\label{sec:deductive_setup} 
We consider a regression problem with $Q = 3$ scalar outputs subject to the constraint $f_1(\mathbf{x}) + f_2(\mathbf{x}) + f_3(\mathbf{x}) = 0$. The three underlying functions are defined on $\mathcal{X} = [0, 1]^3$ as follows: $f_1$ is the Ishigami function \citep{Ishigami1990}, $f_2$ is the Branin function~\citep{jamil2013literature} (rescaled to $[0,1]^3$), and $f_3$ is determined by the constraint $f_3 = -(f_1 + f_2)$. This choice is deliberate: $f_1$ and $f_2$ have markedly different complexity and smoothness properties, so that the difficulty of the regression task is heterogeneous across outputs. The constraint output $f_3$ inherits the combined complexity of both.
Three methods are compared:
\begin{itemize}
  \item CMOGP: the constrained multi-output GP of Section~\ref{sec:constrained_gp}, 
which models all three outputs jointly using a constrained  kernel~\eqref{eq:constrained_lcm} with $\g$  modeled by an LCM ($R=3$, $\ell=1$) and  
anisotropic Mat\'{e}rn $5/2$ covariance functions used for each scalar kernel $k_r$ ($r\in{1,2,3}$). 
  \item Indep.\ GP: independent single-output GPs, one per modeled output, each with an anisotropic Mat\'{e}rn $5/2$ kernel, combined with deduction~\eqref{eq:deduction}.
  \item LCM: a multi-output GP with an LCM kernel ($R = 2$, rank-1 coregionalization matrices, anisotropic Mat\'{e}rn $5/2$ scalar kernels), trained on the $Q - 1 = 2$ modeled outputs, combined with deduction~\eqref{eq:deduction}.
\end{itemize}

We consider three deduction scenarios $ l \in \{1, 2, 3\}$, for every deductive baseline. A key practical distinction is that  CMOGP requires only a single training phase to predict all outputs jointly (each deductive configuration mandates a separately trained model). Our evaluation relies on 200 independent replications. Within a single replication, we sample a fixed test set of 200 points alongside varying training sets of size $N \in \{20, 50, 100\}$ using Latin Hypercube Sampling (LHS). Model hyperparameters are optimized by maximizing the marginal log-likelihood (via L-BFGS-B), using a multi-start strategy with 50 random initializations for multi-output formulations (CMOGP and LCM) and 30 for standard single-output GPs.
Predictive performance is evaluated by the  Root Mean Square Error (RMSE) on each output, defined for $N_{test}$ test samples as:
\begin{equation} \text{RMSE} = \sqrt{\frac{1}{n} \sum_{i=1}^{N_{test}} (y_i - \hat{y}_i)^2}. \end{equation}
Furthermore, the quality of the uncertainty estimates is evaluated through the average length of the $90\%$ predictive intervals. 
\subsection{Results}\label{sec:deductive_results}
We first present the RMSE results on the output $f_2$ (Branin) to
analyze the impact of the arbitrary choice of deduced output, and then turn to a win rate analysis covering all three outputs
 
\paragraph{RMSE across deduction scenarios}
Figure~\ref{fig:rmse_F2} presents the empirical distributions of the RMSE associated with the prediction of $f_2$, for training set sizes $N \in {20, 50, 100}$ and across the three deduction scenarios. Each panel corresponds to a specific choice of deduced output $l$, while the CMOGP model yields identical results across panels, as it is independent of this choice.
A first insight is obtained by contrasting the three panels. When $f_2$ is explicitly modeled (i.e., $l \in \{1,3\}$), the three approaches exhibit comparable performance, as reflected by largely overlapping RMSE distributions. In contrast, a markedly different behavior emerges in the intermediate panel ($l = 2$), where $f_2$ is reconstructed. In this setting, the independent GP displays a noticeable degradation in performance, characterized by both an upward shift in RMSE values. 
This discrepancy highlights a pronounced sensitivity of the independent GP to the choice of deduced output. By comparison, the LCM model appears more stable, with RMSE distributions that remain at a similar level regardless of whether $f_2$ is directly modeled or inferred. While these distributions suggest comparable behavior between the LCM and the CMOGP, such a visual assessment remains limited, as it does not capture how often one method effectively outperforms the others. To address this point, we now turn to a complementary win rate analysis.

\begin{figure}
  \centering
  \includegraphics[width=\textwidth]{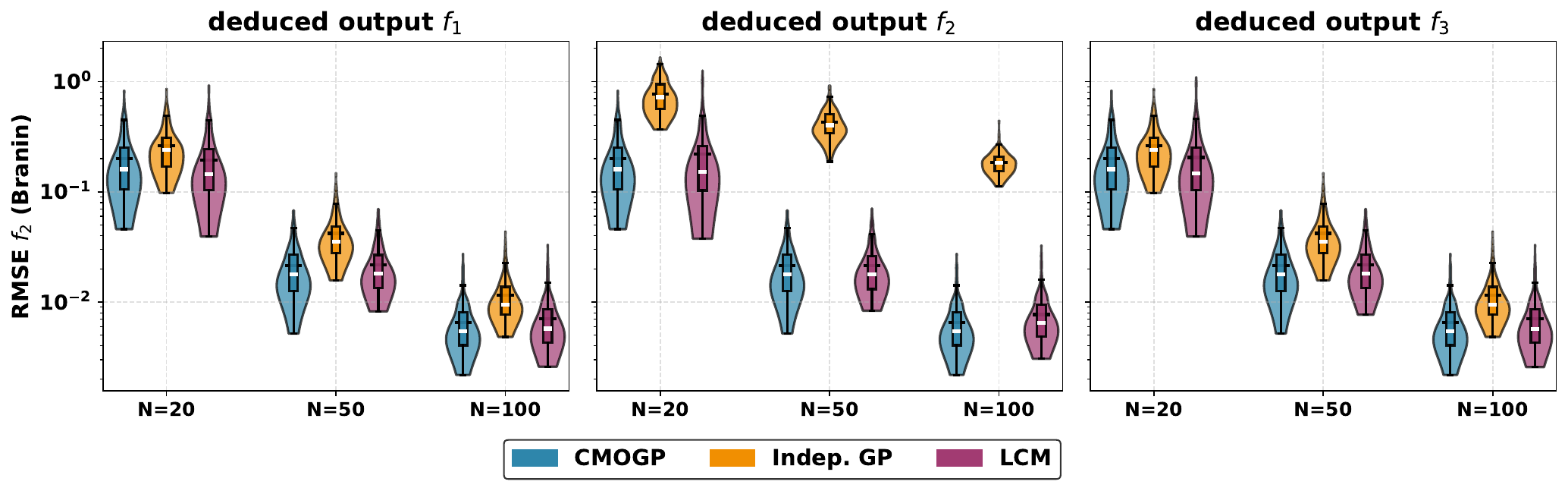}
  \caption{ Distribution of the RMSE on $f_2$ (Branin) across $200$ replications, for training set sizes $N \in \{20, 50, 100\}$ and the three deduction scenarios. Each panel corresponds to a different choice of deduced output $ l$. The CMOGP (blue) is identical across panels. The deductive methods exhibit sensitivity to the choice of $ l$, with the independent GP showing systematic degradation when $f_2$ is deduced (middle panel).}
  \label{fig:rmse_F2}
\end{figure}
 
\paragraph{Win rate analysis}
Figure~\ref{fig:winrate_n20} reports the win rate of each method for $N = 20$, defined as the proportion of replications in which a given method achieves the lowest RMSE for a given output. The three panels correspond to performance evaluated on $f_1$, $f_2$, and $f_3$, respectively, while the horizontal axis indicates the deduced output $l$. 
This view complement Figure~\ref{fig:rmse_F2} in two ways. First, for each evaluated output, the win rates results change across deduction scenarios, with a noticeable shifts when the evaluated quantity is deduced. Since the CMOGP approach yields identical RMSE distributions regardless of the deduction choice, the observed variations in win rates necessarily reflect changes in the relative performance of the two deductive methods (independent GP and LCM). Figure~\ref{fig:winrate_n20} does not allow one to attribute this variability to one deductive method alone. However, the result is consistent with the
heterogeneity already observed on the independent GP in
Figure~\ref{fig:rmse_F2}.
Second, Figure~\ref{fig:winrate_n20} separates the LCM from the CMOGP in a way that the violin plot did not allow. Across outputs and deduction scenarios, the CMOGP wins more often than the LCM in most configurations, even when their RMSE distributions looked close in Figure~\ref{fig:rmse_F2}. The gap in favor of the CMOGP grows with $N$. The corresponding plots for $N = 50$ and $N = 100$ are reported in~\ref{app:deductive}.
\begin{figure}
  \centering
  \includegraphics[width=\textwidth]{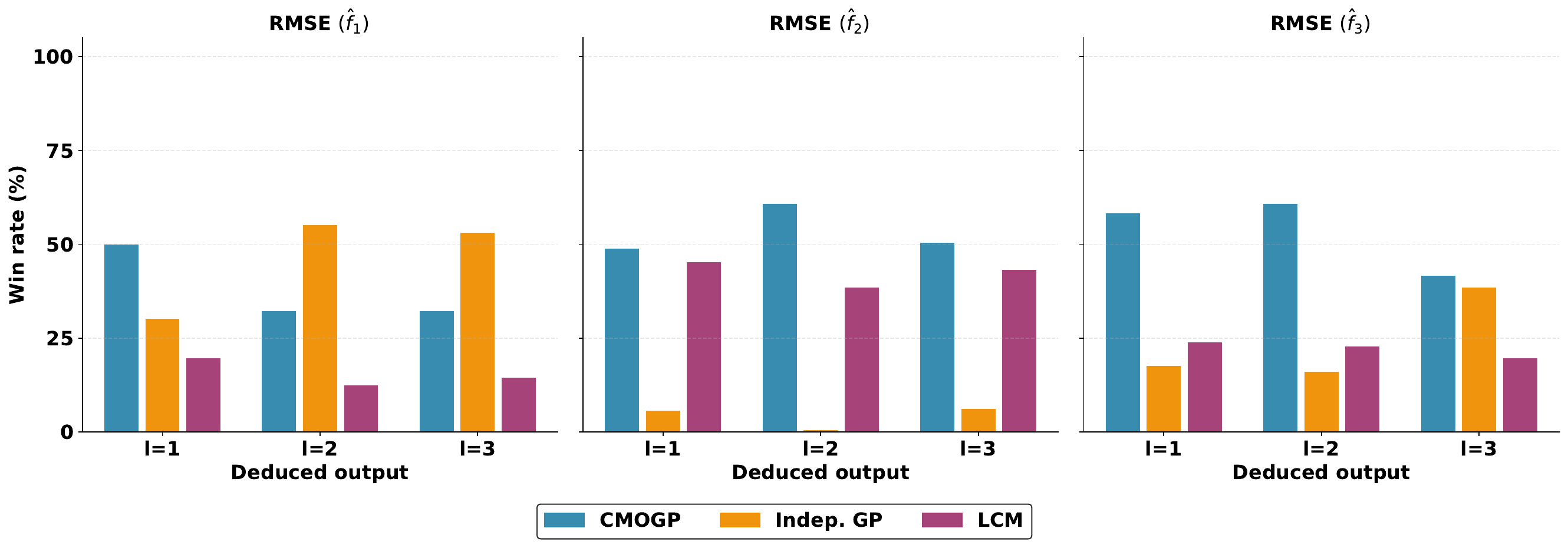}
\caption{ Win rate (\%) of each method at $N = 20$ on the RMSE of each output ($f_1$, $f_2$, $f_3$), across the three deduction scenarios ($l \in \{1, 2, 3\}$). The CMOGP (blue) is the same model in all scenarios. Aggregated over 200 replications.}
\label{fig:winrate_n20}
\end{figure}
 
\subsection{Discussion}\label{sec:deductive_discussion}
 Two main observations emerge from these results.
First, the deductive approach can be sensitive to the choice of which output is deduced. This sensitivity is directly visible on the RMSE of the independent GP, which shifts towards higher values when the evaluated output is the deduced one (Figure~\ref{fig:rmse_F2}, middle panel), and this gap does not vanish with the training set size. The win rate analysis provides an additional piece of evidence: since the CMOGP behaves identically across scenarios, the variability of the win rate ranking between scenarios necessarily comes from the deductive methods. In practice, this means that with a deductive strategy, the choice of the deduced output cannot be made without trying all $Q$ configurations, which multiplies the cost by $Q$ and weakens the scalability of the approach.

Second, between the two deductive methods, the LCM looks more stable than the independent GP on the RMSE distributions of
Figure~\ref{fig:rmse_F2}: modeling the $Q-1$ retained outputs jointly seems to partly absorb the effect of the deductive choice.
Among the joint approaches in the benchmark, the win rate analysis shows that the CMOGP is the best choice. The LCM approach still treats one output as a quantity computed from the others, so the constraint information is only used at the post-processing step. The CMOGP, on the other hand, models all
$Q$ outputs together and encodes the constraint analytically in its kernel. This combination of joint modeling of all $Q$ outputs and analytical use of the constraint translates into the win rate plots, where the CMOGP wins more often than the LCM in most configurations, with a gap that grows with
$N$. We also examined the lengths of the predictive intervals, after verifying that all three methods achieve comparable empirical coverage rates close to the nominal level. The patterns observed on the interval length distributions are similar to those of the RMSE in Figure~\ref{fig:rmse_F2} and do not bring additional information beyond what is already discussed above, see Figure~\ref{fig:interval_len_F2} in~\ref{app:interval_len}.

These findings clearly demonstrate the inherent instability of deductive approaches, which illustrates the need to handle equality constraints through robust frameworks such as the CMOGP. While this motivates to adopt the CMOGP framework for our current problem, a major challenge arises regarding the dimensionality of the outputs.
Since constrained MOGP regression cannot be applied directly when the output dimension $P = Q \times S$ is large, a dimensionality reduction step that preserves the constraint structure in the latent space is needed. This is the subject of the next section.

\section{Principal component analysis for multi-field data}\label{sec:pca_theory}
In this section, we address the problem of reducing the dimension of the output fields while preserving the linear constraint structure, so that the constrained MOGP of Section~\ref{sec:constrained_gp} can be applied in the resulting latent space. We present three PCA strategies for multi-field data, analyze their relationship with the constraint, and establish theoretical bounds comparing their reconstruction quality.

\subsection{PCA strategies for multi-field data}\label{sec:pca_strategies}

For each field $k = 1, \ldots, Q$, we collect the training observations into the data matrix $\mathbf{Y}_k \in \mathbb{R}^{N \times S}$ whose $i$-th row is ${\mathbf{y}^{(i)}_k}^\top$, where $\mathbf{y}^{(i)}_k$ is the $k$-th component of the simulator output introduced in Section~\ref{sec:formulation}. We assume that the sample mean has been subtracted from each row of these data matrices. The non-centered case will be dealt with in Section~\ref{sec:centering}. The singular value decomposition (SVD) of each field is $\mathbf{Y}_k = \mathbf{U}_k \mathbf{D}_k \mathbf{V}_k^\top$, where $\mathbf{V}_k \in \mathbb{R}^{S \times S}$ contains the right singular vectors (spatial basis), $\mathbf{U}_k \in \mathbb{R}^{N \times N}$ the left singular vectors (observation basis), and $\mathbf{D}_k$ the singular values in decreasing order. We define the spatial covariance matrix $\boldsymbol{\Sigma}_k = \frac{1}{N}\mathbf{Y}_k^\top \mathbf{Y}_k \in \mathbb{R}^{S \times S}$ and the Gram matrix $\mathbf{K}_k = \frac{1}{S}\mathbf{Y}_k \mathbf{Y}_k^\top \in \mathbb{R}^{N \times N}$. For a truncation rank $m$, we denote by $\mathbf{V}_{k,m}$ and $\mathbf{U}_{k,m}$ the matrices of the first $m$ right and left singular vectors, and by $\mathbf{P}_{k,m} = \mathbf{V}_{k,m}\mathbf{V}_{k,m}^\top$ and $\boldsymbol{\Pi}_{k,m} = \mathbf{U}_{k,m}\mathbf{U}_{k,m}^\top$ the corresponding spatial and observation projectors.
Three strategies can be considered for simultaneously reducing the dimension of all $Q$ fields.
\paragraph{Field-wise PCA} Each field $\mathbf{Y}_k$ is reduced independently onto its own optimal rank-$m$ spatial basis $\mathbf{V}_{k,m}$. The reconstruction error for field $k$ is
\begin{equation}\label{eq:E_Field}
  E_{k,m}^{(\text{field})} = \|\mathbf{Y}_k(\mathbf{I}_S - \mathbf{P}_{k,m})\|_F^2.
\end{equation}
This approach minimizes the reconstruction error for each field individually, but uses a different basis per field, which prevents any coupling between the latent representations.

\paragraph{Column-wise PCA} The fields are concatenated column-wise as $\mathbf{Y}_{\text{col}} = [\mathbf{Y}_1, \ldots, \mathbf{Y}_Q] \in \mathbb{R}^{N \times QS}$, and a single PCA is performed on $\mathbf{Y}_{\text{col}}$. The projector operates in the observation space: $\boldsymbol{\Pi}_{\text{col},m} = \mathbf{U}_{\text{col},m}\mathbf{U}_{\text{col},m}^\top$, where $\mathbf{U}_{\text{col},m} $ contains the left singular vectors associated with the first $m$ eigenvalues of the SVD decomposition of $\mathbf{Y}_{\text{col}}$. The global and per-field reconstruction errors are
\begin{equation}\label{eq:E_col}
  E_m^{(\text{col})} = \sum_{k=1}^Q E_{k,m}^{(\text{col})}, \qquad E_{k,m}^{(\text{col})} = \|(\mathbf{I}_N - \boldsymbol{\Pi}_{\text{col},m})\mathbf{Y}_k\|_F^2.
\end{equation}
This approach captures inter-field correlations at the dimensionality reduction stage and produces scalar latent variables that can later be modeled by independent GPs.

\paragraph{Row-wise PCA} The fields are stacked row-wise as $\mathbf{Y}_{\text{row}} = [\mathbf{Y}_1^\top, \ldots, \mathbf{Y}_Q^\top]^\top \in \mathbb{R}^{NQ \times S}$. Through the SVD of $\mathbf{Y}_{row}$, we obtain a shared spatial basis $\mathbf{V}_{\text{row},m}$  formed by its first $m$ leading right singular vectors, along with the associated projector $\mathbf{P}_{\text{row},m} = \mathbf{V}_{\text{row},m}\mathbf{V}_{\text{row},m}^\top$. The errors are
\begin{equation}\label{eq:E_row}
  E_m^{(\text{row})} = \sum_{k=1}^Q E_{k,m}^{(\text{row})}, \qquad E_{k,m}^{(\text{row})} = \|\mathbf{Y}_k(\mathbf{I}_S - \mathbf{P}_{\text{row},m})\|_F^2.
\end{equation}
Because all fields share the same spatial basis, the projection of each sample $\mathbf{y}^{(i)} = [\mathbf{y}^{(i)\top}_1, \ldots, \mathbf{y}^{(i)\top}_Q]^\top$ onto $\mathbf{V}_{\text{row},m}$ produces a vector-valued weight $\mathbf{w}_i \in \mathbb{R}^{Q \times m}$, whose components of each column can be modeled jointly by a MOGP.

A key structural property distinguishes the row-wise approach. The row-wise covariance matrix satisfies $\boldsymbol{\Sigma}_{\text{row}} = \frac{1}{Q}\sum_{k=1}^Q \boldsymbol{\Sigma}_k$, and similarly $\mathbf{K}_{\text{col}} = \frac{1}{Q}\sum_{k=1}^Q \mathbf{K}_k$ for the column-wise Gram matrix.

\subsection{Constraint preservation in the latent space}\label{sec:pca_constraint}

Consider the constraint $\sum_{j=1}^Q \alpha_j \mathbf{Y}_j = \mathbf{0}_{N \times S}$ satisfied by the training data. Forming the row-wise concatenation and denoting by $\mathbf{U} = [\alpha_1 \mathbf{I}_N, \ldots, \alpha_Q \mathbf{I}_N] \in \mathbb{R}^{N \times NQ}$ the constraint matrix, the constraint reads $\mathbf{U}\mathbf{Y}_{\text{row}} = \mathbf{0}$. Since the row-wise projector $\mathbf{P}_{\text{row},m}$ acts on the right (spatial dimension), we have:
\begin{equation}\label{eq:constraint_pca}
  \mathbf{U}\mathbf{Y}_{\text{row}} = \mathbf{0} \implies \mathbf{U}\mathbf{W} = \mathbf{U}\mathbf{Y}_{\text{row}}\mathbf{V}_{\text{row},m} = \mathbf{0},
\end{equation}
where $\mathbf{W} \in \mathbb{R}^{NQ \times m}$ denotes the latent weight matrix. The constraint is thus exactly transferred to the latent space, {regardless of the truncation rank $m$}. Furthermore, the reconstruction $\tilde{\mathbf{Y}}_{\text{row}} = \mathbf{W}\mathbf{V}_{\text{row},m}^\top$ also satisfies the constraint:
\begin{equation}\label{eq:constraint_recon}
  \mathbf{U}\tilde{\mathbf{Y}}_{\text{row}} = \mathbf{U}\mathbf{W}\mathbf{V}_{\text{row},m}^\top = \mathbf{0}.
\end{equation}
This property is specific to the row-wise approach. For the column-wise strategy, the projector operates on the observation dimension, and no analogous preservation holds in general. For the field-wise approach, the use of different spatial bases per field destroys any linear relation between the latent representations. Consequently, the row-wise PCA is the only strategy among the three that can be coupled with the constrained MOGP of Section~\ref{sec:constrained_gp} to guarantee constraint satisfaction throughout the entire algorithm.

\subsection{Theoretical analysis of the reconstruction error of multi-field approaches}\label{sec:bounds}

Since the row-wise approach is the only one that preserves the constraint, while the column-wise and field-wise approaches can only be used within a deductive framework, a natural question is whether they can also be compared in terms of reconstruction error ? This allows us to assess whether adopting the row-wise approach requires a significant trade-off in accuracy.
To evaluate this, we compare the row-wise and col-wise approaches (referred to as multi-field approaches) with the field-wise optimum, which by construction minimizes the reconstruction error for each field individually. The proofs of all results in this section are given in~\ref{app:proofs}.

\begin{theorem}[Row-wise excess error bound]\label{thm:row}
Let $\delta_k = \lambda_m^{(k)} - \lambda_{m+1}^{(k)}$ denote the spectral gap of $\boldsymbol{\Sigma}_k$ at rank $m$. If $\delta_k > 0$, then:
\begin{equation}\label{eq:bound_row}
  \Delta E_{k,m}^{(\text{row})} := E_{k,m}^{(\text{row})} - E_{k,m}^{(\text{field})} \;\leq\; \frac{2N\sqrt{2}\,\|\boldsymbol{\Sigma}_k\|_F}{\delta_k} \cdot \min\bigl\{\sqrt{m}\,\|\boldsymbol{\Delta\Sigma}_k\|_{\text{op}},\;\|\boldsymbol{\Delta\Sigma}_k\|_F\bigr\},
\end{equation}
where $\boldsymbol{\Delta\Sigma}_k = \boldsymbol{\Sigma}_{\text{row}} - \boldsymbol{\Sigma}_k = \frac{1}{Q}\sum_{j=1}^Q (\boldsymbol{\Sigma}_j - \boldsymbol{\Sigma}_k)$ and $\|\cdot\|_{\text{op}}$ denotes the spectral norm.
\end{theorem}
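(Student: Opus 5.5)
The plan is to rewrite the excess error as a trace against the difference of two projectors and then control that difference with the Davis--Kahan $\sin\Theta$ theorem. We may assume $N$ is the normalizing factor, so that $\mathbf{Y}_k^\top\mathbf{Y}_k = N\boldsymbol{\Sigma}_k$. For any orthogonal projector $\mathbf{P}$,
\begin{equation*}
\|\mathbf{Y}_k(\mathbf{I}_S-\mathbf{P})\|_F^2 = \mathrm{tr}\bigl(\mathbf{Y}_k^\top \mathbf{Y}_k(\mathbf{I}_S-\mathbf{P})\bigr) = N\,\mathrm{tr}\bigl(\boldsymbol{\Sigma}_k(\mathbf{I}_S-\mathbf{P})\bigr).
\end{equation*}
Applying this to $\mathbf{P}_{\text{row},m}$ and to $\mathbf{P}_{k,m}$ and subtracting gives
\begin{equation*}
\Delta E_{k,m}^{(\text{row})} = N\,\mathrm{tr}\bigl(\boldsymbol{\Sigma}_k(\mathbf{P}_{k,m}-\mathbf{P}_{\text{row},m})\bigr).
\end{equation*}
Cauchy--Schwarz in the Frobenius inner product then yields
\begin{equation*}
\Delta E_{k,m}^{(\text{row})} \le N\|\boldsymbol{\Sigma}_k\|_F\,\|\mathbf{P}_{k,m}-\mathbf{P}_{\text{row},m}\|_F .
\end{equation*}
This inequality is crude but matches the shape of the claimed bound, since it leaves only the projector difference to control.

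Next, regard $\boldsymbol{\Sigma}_{\text{row}} = \boldsymbol{\Sigma}_k + \boldsymbol{\Delta\Sigma}_k$ as a symmetric perturbation of $\boldsymbol{\Sigma}_k$. Then $\mathbf{P}_{\text{row},m}$ is the projector onto the top-$m$ eigenspace of the perturbed matrix. For two rank-$m$ orthogonal projectors we have $\|\mathbf{P}_{k,m}-\mathbf{P}_{\text{row},m}\|_F = \sqrt{2}\,\|\sin\Theta(\mathbf{V}_{k,m},\mathbf{V}_{\text{row},m})\|_F$; this identity is where the factor $\sqrt{2}$ comes from. The variant of Davis--Kahan due to Yu, Wang and Samworth (2015) needs only the gap $\delta_k$ of the unperturbed matrix $\boldsymbol{\Sigma}_k$. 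It gives
\begin{equation*}
\|\sin\Theta\|_F \le \frac{2\min\{\sqrt{m}\,\|\boldsymbol{\Delta\Sigma}_k\|_{\text{op}},\,\|\boldsymbol{\Delta\Sigma}_k\|_F\}}{\delta_k},
\end{equation*}
which supplies the factor $2$. Chaining the three inequalities produces exactly $\frac{2N\sqrt{2}\,\|\boldsymbol{\Sigma}_k\|_F}{\delta_k}\min\{\cdots\}$.

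It remains to justify the expression for $\boldsymbol{\Delta\Sigma}_k$. This follows from $\boldsymbol{\Sigma}_{\text{row}} = \frac{1}{Q}\sum_j\boldsymbol{\Sigma}_j$, as noted in Section~\ref{sec:pca_strategies}, if $\boldsymbol{\Sigma}_{\text{row}}$ is normalized by $NQ$. I should check that this normalization is consistent, although it does not affect the eigenvectors, and hence does not affect $\mathbf{P}_{\text{row},m}$.

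The main obstacle is invoking the correct form of Davis--Kahan. The classical version requires a gap between the eigenvalues of one matrix and those of the other, and it does not deliver the $\min\{\sqrt{m}\,\|\cdot\|_{\text{op}},\|\cdot\|_F\}$ form. The Yu--Wang--Samworth statement handles both issues. The case where $\boldsymbol{\Sigma}_{\text{row}}$ itself has a tie at rank $m$ also needs care: we can take any top-$m$ eigenbasis, and the theorem still applies. A secondary subtlety is the sign of $\Delta E$, which is nonnegative by optimality of field-wise PCA; only the upper bound is needed here, so this causes no trouble.
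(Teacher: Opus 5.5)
Your proposal is correct and follows essentially the same route as the paper: the trace identity $\Delta E_{k,m}^{(\text{row})} = N\,\mathrm{tr}\bigl(\boldsymbol{\Sigma}_k(\mathbf{P}_{k,m}-\mathbf{P}_{\text{row},m})\bigr)$, Cauchy--Schwarz, the identity $\|\mathbf{P}_{k,m}-\mathbf{P}_{\text{row},m}\|_F=\sqrt{2}\,\|\sin\Theta\|_F$, and a Davis--Kahan bound that uses only the gap $\delta_k$ of $\boldsymbol{\Sigma}_k$. Your handling of the constant is more careful than the paper's: its appendix states the one-sided-gap Davis--Kahan bound without the factor $2$, which is false in that form (e.g.\ $\boldsymbol{\Sigma}_A=\mathrm{diag}(1,0)$, $\boldsymbol{\Sigma}_B=\mathrm{diag}(1/2,(1+\eta)/2)$, $m=1$ gives $\|\sin\Theta\|_F=1$ against a right-hand side of $(1+\eta)/2$), and then inserts the $2$ in the last line without derivation, whereas the Yu--Wang--Samworth version you invoke supplies it legitimately.
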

 
\begin{theorem}[Column-wise excess error bound]\label{thm:col}
Let $\delta_k^{(\text{col})} = \lambda_m^{(k)} - \lambda_{m+1}^{(k)}$ denote the spectral gap of $\mathbf{K}_k$ at rank $m$. If $\delta_k^{(\text{col})} > 0$, then:
\begin{equation}\label{eq:bound_col}
  \Delta E_{k,m}^{(\text{col})} := E_{k,m}^{(\text{col})} - E_{k,m}^{(\text{field})} \;\leq\; \frac{2S\sqrt{2}\,\|\mathbf{K}_k\|_F}{\delta_k^{(\text{col})}} \cdot \min\bigl\{\sqrt{m}\,\|\boldsymbol{\Delta K}_k\|_{\text{op}},\;\|\boldsymbol{\Delta K}_k\|_F\bigr\},
\end{equation}
where $\boldsymbol{\Delta K}_k = \mathbf{K}_{\text{col}} - \mathbf{K}_k = \frac{1}{Q}\sum_{j=1}^Q (\mathbf{K}_j - \mathbf{K}_k)$.
\end{theorem}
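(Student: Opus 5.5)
The plan is to mirror the argument one would give for Theorem~\ref{thm:row}, transposing the roles of the spatial and observation dimensions. The first step is to rewrite both errors in terms of the Gram matrix $\mathbf{K}_k = \frac{1}{S}\mathbf{Y}_k\mathbf{Y}_k^\top$. For any rank-$m$ orthogonal projector $\boldsymbol{\Pi}$ on $\mathbb{R}^N$,
\[
\|(\mathbf{I}_N-\boldsymbol{\Pi})\mathbf{Y}_k\|_F^2 = \mathrm{tr}\bigl(\mathbf{Y}_k^\top(\mathbf{I}_N-\boldsymbol{\Pi})\mathbf{Y}_k\bigr) = S\,\mathrm{tr}\bigl((\mathbf{I}_N-\boldsymbol{\Pi})\mathbf{K}_k\bigr).
\]
The field-wise error coincides with this expression for $\boldsymbol{\Pi}=\boldsymbol{\Pi}_{k,m}$: the left and right singular subspaces give the same Eckart--Young residual $\sum_{i>m}\sigma_i^2$. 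Hence
\[
\Delta E_{k,m}^{(\text{col})} = S\,\mathrm{tr}\bigl((\boldsymbol{\Pi}_{k,m}-\boldsymbol{\Pi}_{\text{col},m})\mathbf{K}_k\bigr).
\]
The factor $S$ here plays exactly the role that $N$ plays in \eqref{eq:bound_row}.

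Next, I would bound the trace by Cauchy--Schwarz for the Frobenius inner product:
\[
\Delta E_{k,m}^{(\text{col})} \le S\,\|\mathbf{K}_k\|_F\,\|\boldsymbol{\Pi}_{k,m}-\boldsymbol{\Pi}_{\text{col},m}\|_F .
\]
The projector $\boldsymbol{\Pi}_{\text{col},m}$ is the spectral projector onto the top-$m$ eigenspace of $\mathbf{K}_{\text{col}} = \mathbf{K}_k + \boldsymbol{\Delta K}_k$. This follows because $\mathbf{Y}_{\text{col}}\mathbf{Y}_{\text{col}}^\top = \sum_j \mathbf{Y}_j\mathbf{Y}_j^\top = QS\,\mathbf{K}_{\text{col}}$, so the left singular vectors of $\mathbf{Y}_{\text{col}}$ are its eigenvectors, and this is where the identity $\mathbf{K}_{\text{col}}=\frac1Q\sum_j\mathbf{K}_j$ enters. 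Likewise, $\boldsymbol{\Pi}_{k,m}$ is the top-$m$ spectral projector of $\mathbf{K}_k$.

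Then I would invoke the Davis--Kahan $\sin\Theta$ theorem in the Yu--Wang--Samworth form, which requires only the gap $\delta_k^{(\text{col})}$ of the unperturbed matrix $\mathbf{K}_k$:
\[
\|\sin\Theta\|_F \le \frac{2\min\{\sqrt{m}\,\|\boldsymbol{\Delta K}_k\|_{\text{op}},\,\|\boldsymbol{\Delta K}_k\|_F\}}{\delta_k^{(\text{col})}} .
\]
Combining it with the identity $\|\boldsymbol{\Pi}_{k,m}-\boldsymbol{\Pi}_{\text{col},m}\|_F=\sqrt{2}\,\|\sin\Theta\|_F$ for two rank-$m$ projectors yields exactly the constant $2\sqrt{2}$ and the stated minimum. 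Substituting this into the Cauchy--Schwarz bound gives \eqref{eq:bound_col}.

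The main point of care is the Davis--Kahan step. I need to check that the variant used needs only $\delta_k^{(\text{col})}>0$ for $\mathbf{K}_k$ itself and not a gap condition on the perturbed matrix. I also need to check that the eigenvalue indexing is correct: the nonzero eigenvalues of $\mathbf{K}_k$ and $\boldsymbol{\Sigma}_k$ differ only by the factor $N/S$, which is why the statement reuses the notation $\lambda^{(k)}$. If the top-$m$ eigenspace of $\mathbf{K}_{\text{col}}$ is not unique, any choice works, since Yu--Wang--Samworth applies to any such choice. A crude Cauchy--Schwarz bound suffices here, since a sharper bound is not required.
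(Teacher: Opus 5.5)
Your proposal is correct and follows the same route as the paper's proof. You rewrite $\Delta E_{k,m}^{(\text{col})}$ as $S\,\mathrm{Tr}\bigl(\mathbf{K}_k(\boldsymbol{\Pi}_{k,m}-\boldsymbol{\Pi}_{\text{col},m})\bigr)$, apply Cauchy--Schwarz, then use Davis--Kahan together with $\|\mathbf{P}_A-\mathbf{P}_B\|_F=\sqrt{2}\,\|\sin\Theta\|_F$.

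Your choice of the Yu--Wang--Samworth variant, with its factor $2$ and a gap condition on $\mathbf{K}_k$ alone, is what actually justifies the constant $2\sqrt{2}$. The Davis--Kahan version stated in the paper's appendix omits that factor, which makes it invalid with only an unperturbed gap. As a result, the paper's chain of inequalities does not literally produce the $2$ that appears in the final bound.
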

 
The two bounds share a common structure: the excess error for field $k$ is controlled by the inter-field spectral heterogeneity, the distance between the covariance (or Gram) matrix of field $k$ and the average over all fields. When all fields share similar structures, both bounds tend to zero and the multi-field approaches incur essentially no penalty relative to the field-wise optimum. The two bounds involve different matrices operating in different spaces ($\mathbb{R}^{S \times S}$ vs.\ $\mathbb{R}^{N \times N}$), with different spectral gaps, so their relative ordering is problem-dependent.
 
\begin{corollary}[Vanishing excess error under shared basis]\label{cor:zero}
If all fields share the same spatial basis ($\mathbf{V}_1 = \cdots = \mathbf{V}_Q$), then $\Delta E_{k,m}^{(\text{row})} = 0$ for all $k$. Similarly, if all fields share the same observation basis ($\mathbf{U}_1 = \cdots = \mathbf{U}_Q$), then $\Delta E_{k,m}^{(\text{col})} = 0$ for all $k$.
\end{corollary}

\subsection{Direct comparison of row-wise and column-wise}\label{sec:comparison}
 
The previous bounds compare each multi-field approach to a separate reference (the field-wise optimum). Theorem~\ref{thm:dominance} below provides an exact criterion to compare row-wise and column-wise PCA against each other,
when both are applied to the full set of $Q$ fields.
 
\begin{theorem}[Row-wise vs.\ column-wise global error]\label{thm:dominance}
The difference between the global reconstruction errors of the column-wise and row-wise approaches at rank $m$ is exactly:
\begin{equation}\label{eq:dominance}
  E_m^{(\text{col})} - E_m^{(\text{row})} = \text{Tr}(\mathbf{D}_{\text{row},m}^2) - \text{Tr}(\mathbf{D}_{\text{col},m}^2),
\end{equation}
where $\mathbf{D}_{\text{row},m}^2$ and $\mathbf{D}_{\text{col},m}^2$ are the diagonal matrices of the $m$ largest squared singular values of $\mathbf{Y}_{\text{row}}$ and $\mathbf{Y}_{\text{col}}$, respectively. Equivalently, denoting by $\lambda_1^{(\text{row})} \geq \cdots \geq \lambda_m^{(\text{row})}$ and $\lambda_1^{(\text{col})} \geq \cdots \geq \lambda_m^{(\text{col})}$ the $m$ largest eigenvalues of $\boldsymbol{\Sigma}_{\text{row}} = \frac{1}{Q}\sum_k \boldsymbol{\Sigma}_k$ and $\mathbf{K}_{\text{col}} = \frac{1}{Q}\sum_k \mathbf{K}_k$:
\begin{equation}\label{eq:dominance_eig}
  E_m^{(\text{col})} - E_m^{(\text{row})} = NQ \sum_{p=1}^m \lambda_p^{(\text{row})} - QS \sum_{p=1}^m \lambda_p^{(\text{col})}.
\end{equation}
In particular, neither approach dominates the other in general.
\end{theorem}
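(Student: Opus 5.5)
The plan is to reduce both global errors to the same total energy minus the energy captured by each truncated SVD (Eckart--Young / Pythagoras), and then subtract. The key observation is that $\mathbf{Y}_{\text{row}}$ and $\mathbf{Y}_{\text{col}}$ contain exactly the same entries, so
\[
\|\mathbf{Y}_{\text{row}}\|_F^2=\|\mathbf{Y}_{\text{col}}\|_F^2=\sum_{k=1}^Q\|\mathbf{Y}_k\|_F^2 .
\]
Thus the total energy cancels in the difference, and only the captured energies remain.

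\textbf{Row-wise error.} Because $\mathbf{P}_{\text{row},m}$ is an orthogonal projector,
\[
\|\mathbf{Y}_k(\mathbf{I}_S-\mathbf{P}_{\text{row},m})\|_F^2=\|\mathbf{Y}_k\|_F^2-\|\mathbf{Y}_k\mathbf{P}_{\text{row},m}\|_F^2 .
\]
Summing over $k$, and using that the row blocks of $\mathbf{Y}_{\text{row}}\mathbf{P}_{\text{row},m}$ are exactly the $\mathbf{Y}_k\mathbf{P}_{\text{row},m}$, gives
\[
E_m^{(\text{row})}=\|\mathbf{Y}_{\text{row}}\|_F^2-\|\mathbf{Y}_{\text{row}}\mathbf{V}_{\text{row},m}\|_F^2 .
\]
Since $\mathbf{V}_{\text{row},m}$ consists of the leading right singular vectors, $\|\mathbf{Y}_{\text{row}}\mathbf{V}_{\text{row},m}\|_F^2=\text{Tr}(\mathbf{D}_{\text{row},m}^2)$.

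\textbf{Column-wise error.} In the same way, $\boldsymbol{\Pi}_{\text{col},m}$ acts on the left and the column blocks of $\boldsymbol{\Pi}_{\text{col},m}\mathbf{Y}_{\text{col}}$ are the $\boldsymbol{\Pi}_{\text{col},m}\mathbf{Y}_k$. Hence
\[
E_m^{(\text{col})}=\|\mathbf{Y}_{\text{col}}\|_F^2-\|\mathbf{U}_{\text{col},m}^\top\mathbf{Y}_{\text{col}}\|_F^2=\|\mathbf{Y}_{\text{col}}\|_F^2-\text{Tr}(\mathbf{D}_{\text{col},m}^2).
\]
Subtracting the two expressions yields \eqref{eq:dominance}.

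\textbf{Eigenvalue form.} To obtain \eqref{eq:dominance_eig}, identify the squared singular values with eigenvalues of the normalized matrices. First,
\[
\mathbf{Y}_{\text{row}}^\top\mathbf{Y}_{\text{row}}=\sum_k\mathbf{Y}_k^\top\mathbf{Y}_k=N\sum_k\boldsymbol{\Sigma}_k=NQ\,\boldsymbol{\Sigma}_{\text{row}},
\]
so the squared singular values of $\mathbf{Y}_{\text{row}}$ are $NQ\,\lambda_p^{(\text{row})}$. Similarly,
\[
\mathbf{Y}_{\text{col}}\mathbf{Y}_{\text{col}}^\top=\sum_k\mathbf{Y}_k\mathbf{Y}_k^\top=SQ\,\mathbf{K}_{\text{col}},
\]
so the squared singular values of $\mathbf{Y}_{\text{col}}$ are $QS\,\lambda_p^{(\text{col})}$. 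These are the normalizations already noted in Section~\ref{sec:pca_strategies}, and we check them against the $1/N$, $1/S$ conventions used there.

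\textbf{Non-dominance.} For ``neither dominates,'' two small examples suffice. If all fields share the same spatial basis but have different observation bases, Corollary~\ref{cor:zero} makes row-wise optimal field by field. Take for instance $\mathbf{Y}_1=\mathbf{u}_1\mathbf{v}^\top$ and $\mathbf{Y}_2=\mathbf{u}_2\mathbf{v}^\top$ with $\mathbf{u}_1\perp\mathbf{u}_2$ and $m=1$: then $E_1^{(\text{row})}=0$ while $E_1^{(\text{col})}>0$. The transposed construction, $\mathbf{Y}_1=\mathbf{u}\mathbf{v}_1^\top$ and $\mathbf{Y}_2=\mathbf{u}\mathbf{v}_2^\top$ with $\mathbf{v}_1\perp\mathbf{v}_2$, gives the reverse strict inequality.

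\textbf{Main obstacle.} The only delicate point is bookkeeping rather than substance. One must verify that the projections decompose block-wise for each concatenation: the right projector for row stacking and the left projector for column stacking. One must also make sure the counterexamples remain valid under the centering assumption. If needed, centering can be handled by choosing the $\mathbf{u}$'s orthogonal to the all-ones vector, for instance $N\ge3$ with centered orthogonal vectors.
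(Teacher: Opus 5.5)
Your proof is correct and follows the paper's argument. Both global errors are written as the common total energy $\|\mathbf{Y}_{\text{row}}\|_F^2=\|\mathbf{Y}_{\text{col}}\|_F^2$ minus the energy captured by the truncated SVD, and the eigenvalue form comes from $\mathbf{Y}_{\text{row}}^\top\mathbf{Y}_{\text{row}}=NQ\,\boldsymbol{\Sigma}_{\text{row}}$ and $\mathbf{Y}_{\text{col}}\mathbf{Y}_{\text{col}}^\top=QS\,\mathbf{K}_{\text{col}}$. Your rank-one counterexamples, with the $\mathbf{u}$'s taken orthogonal to the all-ones vector for centering, are valid and go slightly beyond the paper, which asserts the non-dominance clause without proof.
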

 
\subsection{Discussion}\label{sec:bounds_discussion}
The theoretical results of this section provide guarantees for the row-wise PCA strategy that underlies our framework.
Theorems~\ref{thm:row} and~\ref{thm:col} identify inter-field spectral heterogeneity as the sole quantity governing the excess reconstruction error of each multi-field approach relative to the field-wise optimum. For problems where the component fields share similar spatial covariance structures, which is common in physical applications where the fields arise from the same PDE system and are defined on the same mesh, the row-wise approach incurs a negligible penalty while offering the crucial advantage of preserving the linear equality constraint.
Theorem~\ref{thm:dominance} gives an exact criterion to compare row-wise and column-wise PCA on the full set of $Q$ fields. We note that, in the constrained setting, column-wise PCA is in fact applied on $Q-1$ fields rather than $Q$, since one field is removed and recovered by deduction. By optimality of PCA on the larger set, this can only increase the
reconstruction error compared to the column-wise strategy on $Q$ fields. As a consequence, whenever the criterion of Theorem~\ref{thm:dominance} indicates equivalence or row-wise dominance on the $Q$ fields, the row-wise approach also dominates the deductive column-wise variant.

From a practical standpoint, the bounds of Theorems~\ref{thm:row} and~\ref{thm:col} and the criterion of Theorem~\ref{thm:dominance} can be evaluated directly from the data. They give a way of assessing which
regime one is in: similar fields, heterogeneous fields, row wise or column-wise dominance. This is useful in practice when no prior information is available on the choice between the strategies, and even more so when one suspects the fields to be correlated.

\subsection{The \texttt{Row-CMO} framework}\label{sec:pcacgp}

We now combine the row-wise PCA with the constrained MOGP of Section~\ref{sec:constrained_gp} to form the \texttt{Row-CMO} framework. Recall that the constraint under consideration is $\sum_{j=1}^Q \alpha_j \mathbf{f}_j(\mathbf{x}) = \mathbf{0}_S$, where the coefficients $\alpha_j$ are non-zero scalars independent of $\mathbf{x}$. The complete procedure is summarized in Algorithm~\ref{alg:pcacgp}.

\begin{algorithm}[t]
\caption{\texttt{Row-CMO}}\label{alg:pcacgp}
\begin{algorithmic}[1]
\REQUIRE Training data $\bigl((\mathbf{x}^{(i)})_{i=1}^N,\, \mathbf{Y}_1, \ldots, \mathbf{Y}_Q\bigr)$, coefficients $\boldsymbol{\alpha} = (\alpha_1, \ldots, \alpha_Q)$, and rank $m$.
\ENSURE Constrained predictive model $\hat{\mathbf{f}}$ satisfying $\sum_{j=1}^Q \alpha_j \hat{\mathbf{f}}_j = \mathbf{0}_S$.
\STATEx \hspace*{-2.em} \textbf{Step 1: Data Pre-processing (Training)}
\STATE Center each field: $\mathbf{Y}_j \leftarrow \mathbf{Y}_j - \mathbf{1}_N \bar{\mathbf{y}}^{\top}_j$, where $\bar{\mathbf{y}}_j$ is the empirical mean of field $j$.
\STATE Stack the centered fields row-wise into $\mathbf{Y}_{\text{row}} \in \mathbb{R}^{NQ \times S}$ and extract the $m$-rank spatial basis $\mathbf{V}_{\text{row},m} \in \mathbb{R}^{S \times m}$ via SVD.
\STATE Project data into the latent space: $\mathbf{w}_j^{(i)} = \mathbf{y}^{(i)\top}_j\, \mathbf{V}_{\text{row},m} \in \mathbb{R}^m$ 
\STATEx \hspace*{-2.em} \textbf{Step 2: Model Training}
\FOR{each latent dimension $s = 1, \ldots, m$}
    \STATE Fit a constrained MOGP on $\bigl((\mathbf{x}^{(i)})_{i=1}^N,\, (w^{(i)}_{1,s}, \ldots, w^{(i)}_{Q,s})_{i=1}^N\bigr)$ using the kernel of Eq.~\eqref{eq:param_prior}.
\ENDFOR
\STATEx \hspace*{-2.em} \textbf{Step 3: Prediction at a new input $\mathbf{x}_*$}
\FOR{each latent dimension $s = 1, \ldots, m$}
    \STATE Compute the latent prediction $\hat{\mathbf{w}}_{*,s} = [\hat{w}_{*,1,s}, \ldots, \hat{w}_{*,Q,s}]^\top$ from the $s$-th CMOGP.
\ENDFOR
\STATE Assemble per-field latent predictions: $\hat{\mathbf{w}}_{*,j} = [\hat{w}_{*,j,1}, \ldots, \hat{w}_{*,j,m}]^\top$ for $j = 1, \ldots, Q$.
\STATE Reconstruct each physical field $\hat{\mathbf{y}}_{*,j} = \bar{\mathbf{y}}_j + \mathbf{V}_{\text{row},m}\, \hat{\mathbf{w}}_{*,j}$ for $j = 1, \ldots, Q$.
\end{algorithmic}
\end{algorithm}
By construction, the constraint is satisfied at every stage of Algorithm~\ref{alg:pcacgp}: in the latent weights (line~5), in the MOGP predictions and posterior samples (line~8), and in the reconstructed fields (line~11). The resulting predictive mean and any posterior sample thus satisfy $\sum_j \alpha_j \hat{\mathbf{f}}_j(\mathbf{x}) = \mathbf{0}_S$ to machine precision, for all $\mathbf{x} \in \mathcal{X}$.

 \subsubsection{Extension to noisy data}
\label{sec:noisy_data}

The framework presented above assumes noise-free data, which is the typical situation for outputs of deterministic numerical simulators. In applications where the outputs fields are corrupted by observation noise, two steps of the procedure require an adjustment: the centering step in
Algorithm~\ref{alg:pcacgp}, and the GP regression in the latent space.
\paragraph{Constraint-consistent centering}\label{sec:centering}
In the noise-free case, the field means $\bar{\mathbf{y}}_k$ naturally satisfy $\sum_j \alpha_j \bar{\mathbf{y}}^j = \mathbf{0}_S$, so standard centering preserves the constraint on the centered data. When the outputs data are corrupted by noise, however, the empirical means may not satisfy the constraint exactly: $\sum_j \alpha_j \bar{\mathbf{y}}^j = \mathbf{r} \neq \mathbf{0}_S$, where $\mathbf{r} \in \mathbb{R}^S$ is a residual vector. If left uncorrected, this residual propagates into the centered data and breaks
the constraint structure that is essential for the row-wise PCA to transfer the constraint to the latent space.

To address this, we use a constraint-consistent centering that
distributes the residual proportionally across the field means.
Specifically, we define adjusted means as
\begin{equation}
\tilde{\bar{\mathbf{y}}}_k = \bar{\mathbf{y}}_k -
\frac{\alpha_k}{\sum_{j=1}^Q (\alpha_j)^2}\, \mathbf{r},
\qquad k = 1, \ldots, Q,
\label{eqn:adjusted_means}
\end{equation}
which satisfy $\sum_j \alpha_j \tilde{\bar{\mathbf{y}}}^j = \mathbf{0}_S$
by construction. This adjustment is the minimum-norm correction (in the
Euclidean sense on the vector $[\bar{\mathbf{y}}^1, \ldots,
\bar{\mathbf{y}}^Q]$) that restores the constraint on the means. The
centering of Algorithm~\ref{alg:pcacgp} (line 1) is then performed using
$\tilde{\bar{\mathbf{y}}}_k$ in place of $\bar{\mathbf{y}}_k$.

\paragraph{Constrained MOGP regression with noise}
The constrained MOGP fitted on the latent components in
Step~2 of Algorithm~\ref{alg:pcacgp} can be extended to handle
noisy observations through the standard MOGP likelihood. Assuming an observation model $\mathbf{w}_i^s = \mathbf{h}_s(\mathbf{x}^{(i)}) +
\boldsymbol{\eta}_i^s$ for each latent dimension $s$, with independent Gaussian noise $\boldsymbol{\eta}_i^s \sim \mathcal{N}(\mathbf{0}, \boldsymbol{\Sigma}^s)$ and
$\boldsymbol{\Sigma}^s = \mathrm{diag}(\sigma_{s,1}^2, \ldots,
\sigma_{s,Q}^2)$ collecting the per-output noise variances, the covariance matrix to invert in the posterior expressions becomes $\mathbf{k}(\mathbf{X}, \mathbf{X}) + \boldsymbol{\Sigma}^s \otimes
\mathbf{I}_N$. The noise variances are estimated jointly with the kernel hyperparameters by maximizing the marginal log-likelihood. The rest of the procedure (Steps 1--3 of Algorithm~\ref{alg:pcacgp}) is unchanged.

\subsubsection{Handling more general constraints}\label{sec:generalization}
 
Algorithm~\ref{alg:pcacgp} is formulated for the constraint $\sum_j \alpha_j \mathbf{f}_j(\mathbf{x}) = \mathbf{0}_S$ with scalar coefficients $\alpha_j$ independent of $\mathbf{x}$. We now discuss two generalizations of practical interest.
 
\paragraph{Input-dependent coefficients} When the constraint coefficients depend on the input, $\sum_j \alpha_j(\mathbf{x})\, f_j(\mathbf{x}) = 0$, with each $\alpha_j(\cdot)$ known and non-vanishing on $\mathcal{X}$, one can introduce the transformed variables $z_i^j = \alpha_j(\mathbf{x}^{(i)})\, y_i^j$ for each observation $i$ and field $j$. These satisfy $\sum_j z_i^j = 0$, which is a constraint with unit scalar coefficients. Algorithm~\ref{alg:pcacgp} is then applied to the matrices $\mathbf{Z}_k$ with $\alpha_j = 1$ for all $j$. The inverse transformation $\hat{y}_*^j = \hat{z}_*^j / \alpha_j(\mathbf{x}_*)$ is well defined since the coefficients are non-zero by assumption. 
 
\paragraph{Non-zero second member} When the constraint takes the form $\sum_j \alpha_j \mathbf{f}_j(\mathbf{x}) = \mathbf{c}(\mathbf{x})$ with a known, non-zero function $\mathbf{c} : \mathcal{X} \to \mathbb{R}^S$, the goal is to reduce the problem to a homogeneous constraint by defining transformed variables that sum to zero. This reduction requires subtracting an appropriate share of $\mathbf{c}(\mathbf{x})$ from each output field. A natural approach is to define, for each field $k$:
\begin{equation}\label{eq:transform_c}
  \tilde{\mathbf{y}}^{(i)}_k = \mathbf{y}^{(i)}_k - \frac{\beta_k}{\sum_j \alpha_j \beta_j}\, \mathbf{c}(\mathbf{x}^{(i)}), \quad k = 1, \ldots, Q,
\end{equation}
where $\beta_k > 0$ are  weights satisfying $\sum_j \alpha_j \beta_j \neq 0$. The transformed variables satisfy $\sum_j \alpha_j \tilde{\mathbf{y}}_j^{(i)} = 0$ for any choice of $\boldsymbol{\beta}$, and Algorithm~\ref{alg:pcacgp} can be applied to $\{\tilde{\mathbf{Y}}_k\}$.
 
The choice of $\boldsymbol{\beta}$, however, is not neutral: subtracting a spatially complex function $\mathbf{c}(\mathbf{x})$ from one or several fields modifies their variance structure, which in turn affects the row-wise PCA reconstruction quality. In particular, assigning the entire correction to a single output (e.g., $\beta_l = 1/\alpha_l$ and $\beta_k = 0$ for $k \neq l$) would reintroduce an arbitrary analogous to the arbitrary output selection in the deductive approach. Uniform weights ($\beta_k = \alpha_k$, recovering~\eqref{eqn:adjusted_means}) avoid this issue but may not be optimal when the fields have heterogeneous variance levels. In practice, the weights can be guided by domain knowledge or by a variance-based criterion: a field $k$ for which the transformation $\mathbf{y}^{(i)}_k - \beta_k \mathbf{c}(\mathbf{x}^{(i)})$ does not significantly increase the empirical variance is a natural candidate for absorbing a larger share of $\mathbf{c}(\mathbf{x})$. A systematic study of optimal redistribution strategies in this setting is an interesting direction for future work.

 
\section{Numerical experiments}\label{sec:experiments}
 
We now validate the \texttt{Row-CMO} framework on two case studies of increasing complexity. The first is a population dynamics problem based on the Lotka--Volterra system, which serves as a controlled test bed for evaluating the theoretical results of Section~\ref{sec:pca_theory} and confirming the sensitivity of deductive approaches observed in Section~\ref{sec:constraint}. The second is a Computational Fluid Dynamics (CFD) application involving the prediction of Reynolds stress tensor components under an incompressibility constraint, which demonstrates the applicability of \texttt{Row-CMO} to a realistic high-dimensional industrial problem.
 
In both case studies, we compare \texttt{Row-CMO} against deductive alternatives that combine different PCA strategies with unconstrained GP regression. The competing methods are:
\begin{itemize}
  \item \texttt{Row-CMO} (our model): Row-wise PCA + Constrained MOGP;
  \item \texttt{Col-Indep$(l)$}: Column-wise PCA + Independent GPs with deduced output $l$;
  \item \texttt{Fw-Indep$(l)$}: Field-wise PCA + Independent GPs with deduced output $l$;
  \item \texttt{Fw-LCM$(l)$}: Field-wise PCA + LCM with deduced output $l$.
\end{itemize}
For each deductive method, $l$ ranges over all $Q$ possible deduction scenarios. The comparison focuses on predictive accuracy  as a function of the latent dimension $m$ and the training set size $N$.
The Relative Root Mean Square Error (RRMSE) provides a global measure of accuracy for each predicted field $\hat{\mathbf{y}}^{(i)}$ relative to the simulated reference $\mathbf{y}^{(i)}$:
\begin{equation}\label{eq:rrmse}
  \text{RRMSE}^2 = \frac{1}{N_{\text{test}}} \sum_{i=1}^{N_{\text{test}}} \frac{\|\hat{\mathbf{y}}^{(i)} - \mathbf{y}^{(i)}\|_2^2}{S\,\|\mathbf{y}^{(i)}\|_\infty^2}
\end{equation}
on a test set of size $N_{test}$.
Hyperparameters are optimized by maximum likelihood via L-BFGS-B with $50$ multi-start initializations for multi-output models and $30$ for single-output GPs.

\subsection{Population dynamics: Lotka--Volterra system}\label{sec:lv}
 
\subsubsection{Problem setup}
 
We consider the Lotka--Volterra prey-predator system \citep{Wangersky1978}:
\begin{equation}
    \begin{cases}
      p'(t) = a\,p(t) - b\,p(t)\,q(t) \\
      q'(t) = -c\,q(t) + d\,p(t)\,q(t)
    \end{cases},
    \label{eq:lv}
\end{equation}
where $p(t)$ and $q(t)$ denote the prey and predator population densities. The positive parameters $a$ and $c$ are the prey growth rate and the predator mortality rate, while $b$ and $d$ control the prey--predator interaction. This system admits a Hamiltonian structure \citep{Nutku1990}, which we linearize to obtain the constraint:
\begin{equation}\label{eq:lv_constraint}
  d\,p(t) - c\,s(t) + b\,q(t) - a\,r(t) = C_0,
\end{equation}
where $s = \log(p)$, $r = \log(q)$, and $C_0$ is a constant determined by the initial conditions. The metamodeling task consists in simultaneously predicting $Q = 4$ output fields $(p, q, r, s)$, each discretized on a uniform temporal grid of $S = 20 000$ points in $[0, 20]$, as a function of the input parameters $\mathbf{x} = (b, d) $. The inputs $b$ and $d$ are assumed to be independent and uniformly distributed over $[0.37, 0.40]$ and $[0.0, 0.06]$, respectively, with fixed parameters $a = 1.1$, $c = 0.4$ and initial conditions $(p(0), q(0)) = (1.9, 0.3)$.
 
Training sets of size $N \in \{10, 15, 30\}$ with corresponding test sets of size $N_{\text{test}} \in \{90, 85, 70\}$ such that $N+N_{\text{test}}=100$, are independently sampled from this joint uniform distribution. The constrained kernel in \texttt{Row-CMO} uses an LCM kernel $(R = 2, l = 2$, anisotropic Mat\'{e}rn $5/2$ kernels) on the latent function $\mathbf{g}$ (cf.\ Section~\ref{sec:constrained_gp}), while the LCM in \texttt{Fw-LCM} uses $R = 2$ scalar kernels (coregionalization matrices of rank at most 2, anisotropic Mat\'{e}rn $5/2$ kernels). 
All results are averaged over $10$ independent replications with randomized train and test sets, and optimizer initializations. The latent dimension $m$ varies from $1$ to $10$. All methods enforce the constraint to machine precision on the predictive mean: \texttt{Row-CMO} does so by construction, while the deductive methods achieve it through~\eqref{eq:deduction}. We therefore focus the comparison on predictive accuracy (RRMSE) as a function of $m$ and $N$. We illustrate the results on the output $p$ (prey density), and analogous patterns are observed on the other outputs and reported in~\ref{app:lv_results}.

\subsubsection{Results}
\paragraph{Alignment of Theorem~\ref{thm:dominance} with empirical RRMSE}
Figure~\ref{fig:lv_thm3} (left panel) displays the normalized dominance criterion of Theorem~\ref{thm:dominance} as a function of $m$ for $N \in \{30, 15, 10\}$. 
The criterion is positive for the first few modes indicating that column-wise on $Q$ fields captures slightly more variance and converges rapidly toward zero beyond $m \approx 4$, after which the two strategies explain essentially the same amount of variance. Notably, the convergence is faster for smaller $N$: with fewer training samples, the estimated covariance matrices become more similar, reducing the gap between the two PCA strategies. The RRMSE panels (right) confirm this alignment: at small $m$, \texttt{Row-CMO} starts with a higher RRMSE than \texttt{Col-Indep} and \texttt{Fw-Indep} variants, but the gap narrows as $m$ increases and vanishes around $m = 5$--$6$, precisely where the criterion indicates near-equivalence. 
\begin{figure}
  \centering
  \includegraphics[width=\textwidth]
  {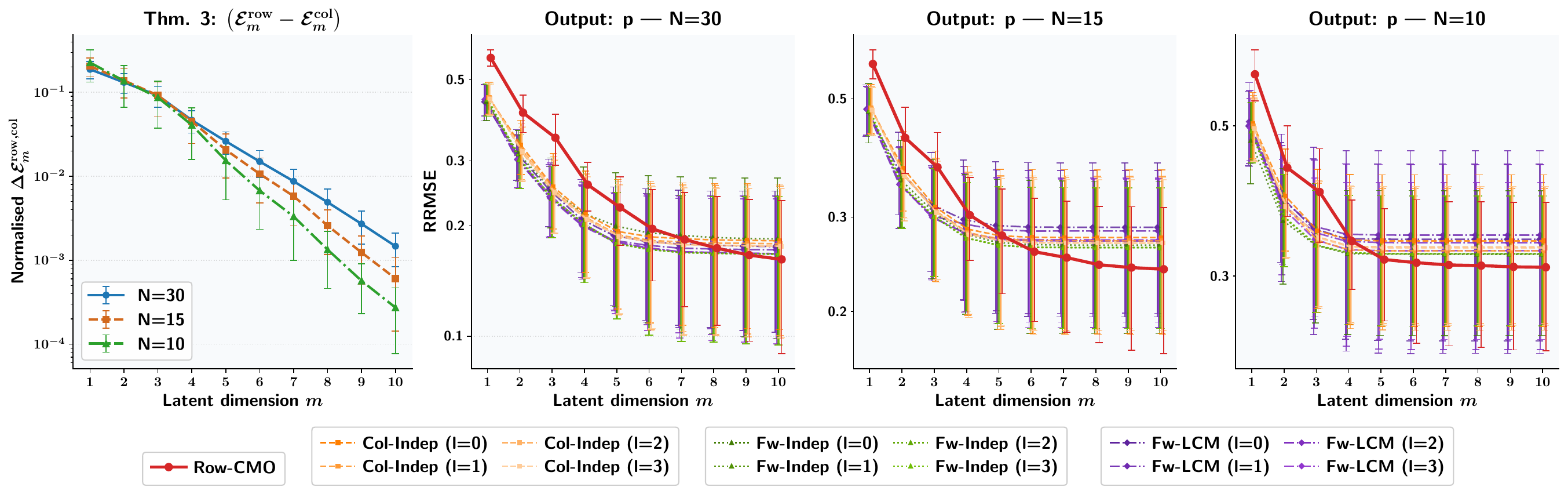}
  \caption{Left: normalized dominance criterion of Theorem~\ref{thm:dominance} as a function of $m$, for $N \in \{30, 15, 10\}$. Right: RRMSE on output $p$ for all methods and the three training set sizes ($m = 1, \ldots, 10$). Error bars represent $\pm 1$ standard deviation over $10$ seeds. The dominance criterion converges to zero beyond $m \approx 4$, and \texttt{Row-CMO} surpasses all alternatives at higher latent dimensions.}
  \label{fig:lv_thm3}
\end{figure}
\paragraph{Predictive advantage of the row-wise PCA-Constrained MOGP combination}
Although the dominance criterion indicates near-equivalence of the PCA strategies on the training data for $m \geq 5$, the RRMSE which measures predictive performance on unseen test data points to a clear advantage for our approach. Figure~\ref{fig:lv_zoom} provides a zoomed view of the RRMSE for $m \in \{5, \ldots, 10\}$, where \texttt{Row-CMO} consistently achieves the lowest RRMSE across all three training set sizes. The gap between \texttt{Row-CMO} and the alternatives widens as $N$ decreases: for $N = 10$, \texttt{Row-CMO} clearly separates from all deductive methods, whereas for $N = 30$ the differences are smaller but \texttt{Row-CMO} remains the best-performing method.
 
This discrepancy between the near-equivalence observed on training data (via the criterion) and the clear superiority observed on test data (via the RRMSE) points to a generalization advantage of the combination of row-wise PCA with constrained MOGP. Two mechanisms contribute to this effect. First, the row-wise PCA benefits from an information-pooling effect when data are scarce: the shared spatial basis is estimated from $NQ$ stacked observations, providing a more stable estimate than the $Q$ separate bases of the field-wise approach (each estimated from $N$ observations) or the single column-wise basis (estimated from $N$ observations in a $QS$-dimensional space). This improved stability translates into better out-of-sample reconstruction. Second, the constrained MOGP encodes the constraint analytically in the kernel rather than estimating the coregionalization structure from data, which reduces the effective number of hyperparameters and avoids the sensitivity to the deduction choice that penalizes the alternatives. The fact that this advantage is amplified for smaller $N$ supports the interpretation in terms of generalization capacity. 
 
We also note that \texttt{Row-CMO} outperforms not only \texttt{Col-Indep} but also \texttt{Fw-Indep} and \texttt{Fw-LCM} , despite the fact that field-wise PCA is optimal for each individual field on the training data (Theorem~\ref{thm:row}). 
 
\begin{figure}
  \centering
  \includegraphics[width=\textwidth]{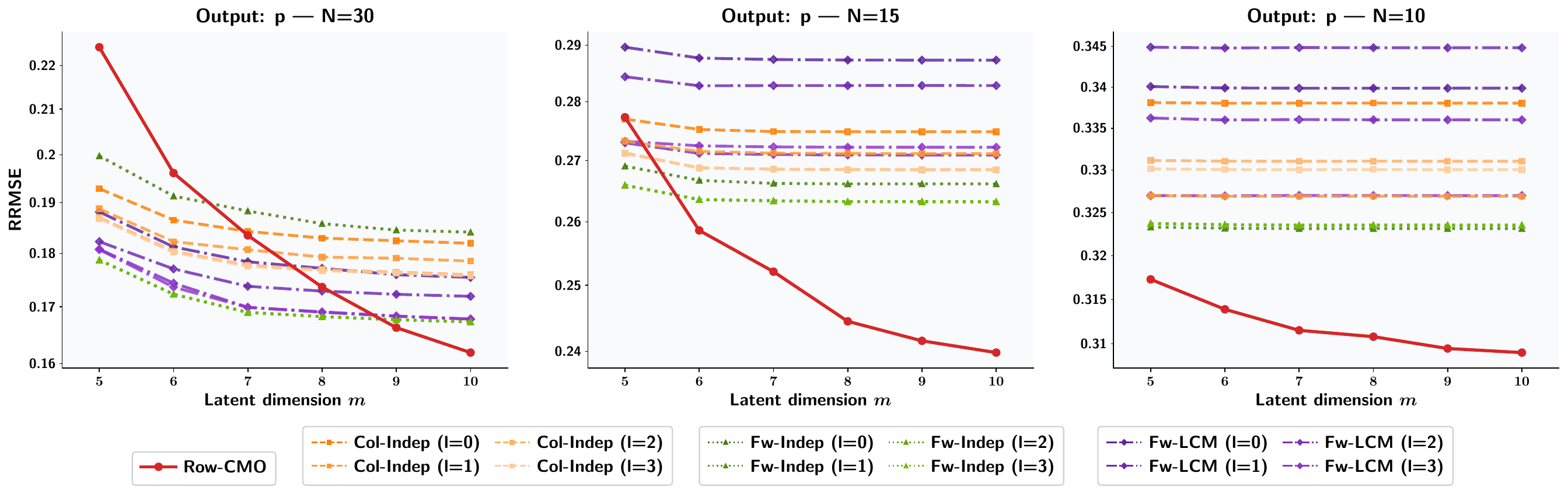}
  \caption{Zoomed view of the RRMSE on output $p$ for $m \in \{5, \ldots, 10\}$ and training set sizes $N \in \{30, 15, 10\}$. \texttt{Row-CMO} (red, solid) consistently achieves the lowest RRMSE. The spread of \texttt{Col-Indep}, \texttt{Fw-Indep}, and \texttt{Fw-LCM}  curves across deduction scenarios ($l \in \{0, 1, 2, 3\}$) illustrates the sensitivity of the deductive approaches to the arbitrary choice of deduced output.}
  \label{fig:lv_zoom}
\end{figure}
 
\paragraph{Heterogeneity on deductive approaches results}
The spread of the deductive method curves across deduction scenarios is clearly visible in both Figures~\ref{fig:lv_thm3} and~\ref{fig:lv_zoom}. 
This variability is present across all values of $N$ and all latent dimensions, confirming that it is an inherent feature of the deductive strategy. 

In summary, the Lotka--Volterra experiments confirm the three key messages of this paper: (i)~the dominance criterion of Theorem~\ref{thm:dominance} reliably predicts the relative PCA performance from the training data, (ii)~the combination of row-wise PCA with constrained MOGP exhibits a generalization advantage driven by the information-pooling effect of the shared basis and the analytical constraint encoding of the CMOGP that becomes most pronounced in data-scarce regimes, and (iii)~the deductive approaches suffer from a sensitivity to the output selection that persists across all configurations. These findings motivate the application of \texttt{Row-CMO} to the industrial-scale CFD problem that follows.

\subsection{Application to a CFD test case: uncertainty propagation of parameters of a turbulence model on a two-dimensional diffuser}
In this section, we present an application that was the primary motivation for the methods described above, which concerns the construction of metamodels for CFD fields using Gaussian processes. We propose addressing an uncertainty propagation problem within the context of the turbulent fluid flow in a diffuser, represented by a channel with a progressive widening. Its geometry is represented in Figure \ref{fig:BE_diffuser_scheme}. Experimental measurements on this diffuser has been performed in \citet{Buice2000} with a Reynolds number of 18,000. Experimentally, a fluid recirculation is observed between points $R_1$ and $R_2$ in Figure \ref{fig:BE_diffuser_scheme}. This recirculation is due to the appearance of an opposing pressure gradient (opposite to the main flow direction) caused by the widening of the cross-section, which in turn causes the boundary layer to detach. The main challenge of this application for RANS models lies in accurately predicting both the recirculation amplitude and limits, along with the velocity field throughout the entire domain.

Numerical simulations of the Buice-Eaton 2D diffuser were carried out using the solver \texttt{TrioCFD} \citep{Puscas2025}, exploring different sets of input parameters. Each simulation employed a mesh of $200,000$ triangular elements which correspond to $ S = 141,039$ mesh coordinates. To enhance computational efficiency, a domain decomposition strategy combined with a message-passing interface (MPI) was implemented for parallel computations on 11 CPU cores.
The standard $k-\varepsilon$ turbulence model \citep{Launder1974} was used, with inlet boundary conditions for velocity, turbulent kinetic energy ($k$), and dissipation rate ($\varepsilon$) obtained from a preliminary periodic simulation. A zero-pressure boundary condition was applied at the channel outlet.
For the uncertainty propagation study, the input parameters correspond to the closure coefficients of the RANS standard $k-\varepsilon$ turbulence model:

\begin{equation}\label{def:input_cfd}
\mathbf{x} = (C_{\mu},\sigma_k,\sigma_{\varepsilon}, C_{\varepsilon_1}, C_{\varepsilon_2}) \ .
\end{equation}

These five parameters govern the transport equations for $k$ and $\varepsilon$: $C_\mu$ is the proportionality constant between the eddy viscosity and $k^2/\varepsilon$, $\sigma_k$ and $\sigma_\varepsilon$ are the turbulent Prandtl numbers for $k$ and $\varepsilon$ respectively, and
$C_{\varepsilon_1}$ and $C_{\varepsilon_2}$ are the source and sink term coefficients in the equation for $\varepsilon$ respectively.
Their uncertainties are represented by probability distributions chosen based on physical considerations. Further details on the RANS modeling, the derivation of these distributions and the sampling procedure are provided in~\ref{app:uq_turb_model}.
\begin{figure}
    \centering
    \includegraphics[width=0.8\linewidth]{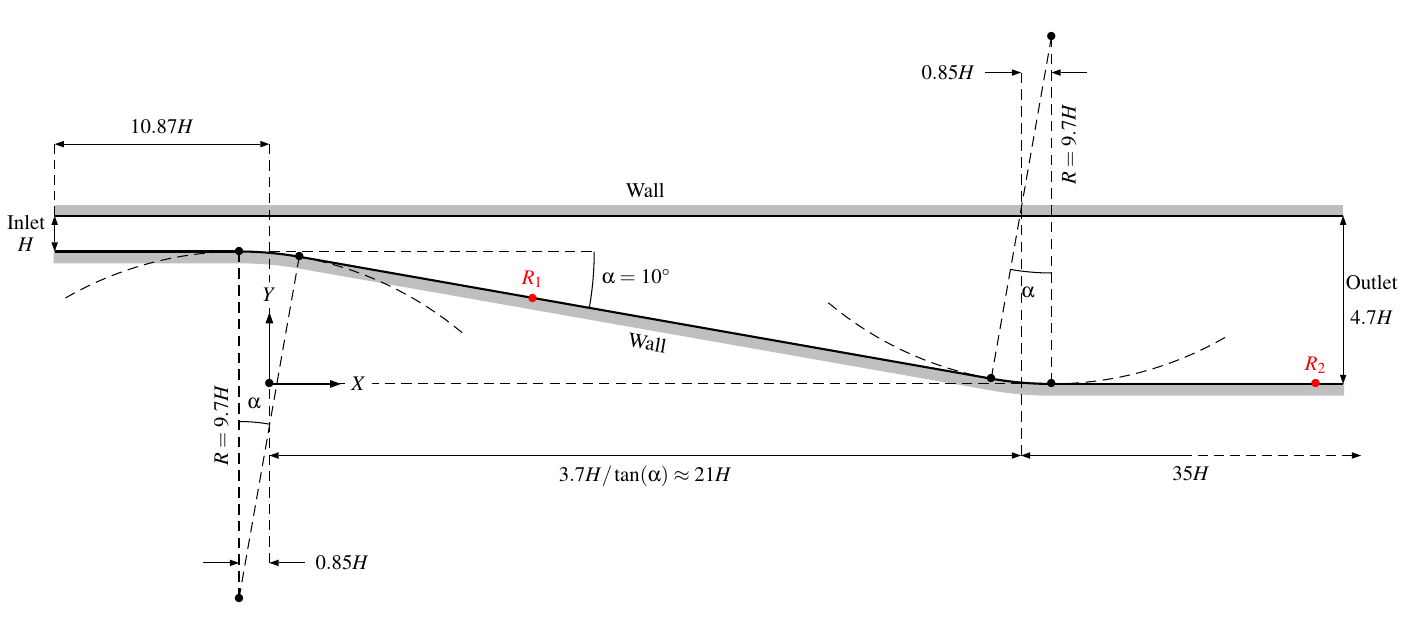}
    \caption{Scheme of the experiment carried out in \citep{Buice2000}. The wall of the channel are defined by straight lines and circular arcs of centers $O_1$ and $O_2$.}
    \label{fig:BE_diffuser_scheme}
\end{figure}\\

The quantity of interest considered as outputs in this study are the diagonal terms of the Reynolds stress tensor (RST) $\tau_{ij} = \overline{u_{i}' u_{j}'} $ , and the turbulent kinetic energy. This selection is motivated by the goal of constructing a physics-based surrogate model for the Reynolds stress tensor, as represented in linear eddy-viscosity turbulence models such as the standard $k$-$\varepsilon$ model:
\begin{equation}
    \tau_{ij} = -\nu_t \left(\frac{\partial\bar{u}_i}{\partial x_j} +\frac{\partial\bar{u}_j}{\partial x_i} \right) + \frac{2}{3}k\delta_{ij}
    \label{eqn:Linear_eddy_equation}
\end{equation}
where $\nu_t$ is the eddy viscosity. This expression is known as the Boussinesq assumption \citep{boussinesq1897theorie}, written here for incompressible flow. Combined with the periodicity in the spanwise direction (i.e $\frac{\partial\bar{u}_3}{\partial x_3} = 0$), it implies the following linear equality constraint between the diagonal components of the Reynolds stress tensor and the turbulent kinetic energy:
\begin{equation}
\tau_{11} + \tau_{22} - \frac{4}{3}k = 0.
\end{equation}
\paragraph{Training and testing datasets generation} A candidate set \((\mathbf{x}^{(i)})_{1 \leq i \leq N_c}\) of size \(N_c = 1{,}000\) is generated following the procedure described in \ref{app:uq_turb_model}. From this candidate set, a sub-sample of size \(n = 50\) is selected using a space-filling greedy algorithm that maximizes the energy distance \citep{Mak2018, Teymur2021}, thereby defining the training dataset. The testing dataset is then constructed using a Monte Carlo sample of size \(N = 100\) of the input parameters, generated according to the same procedure. Results are averaged over $10$ training runs with different optimizer initialization seeds. For the kernel configuration, the constrained kernel in \texttt{Row-CMO} uses an LCM ( $R = 2, \ell = 2$, anisotropic Mat\'{e}rn $5/2$) 
while the LCM in \texttt{Fw-LCM}  uses $R = 2$ scalar kernels with the corresponding coregionalization matrices $\mathbf{B}_1$ and $\mathbf{B}_2$ having ranks at most $1$ and $2$, respectively.
To assess the spatial distribution of prediction errors, we use the Spatial Relative Squared Error (SRSE), defined at each mesh location $\nu$ for a given test input $i$ as:
\begin{equation}\label{eq:srse}
  \text{SRSE}(\hat{y}^{(i)}(\nu),\, y^{(i)}(\nu)) = \frac{(\hat{y}^{(i)}(\nu) - y^{(i)}(\nu))^2}{\|\mathbf{y}^{(i)}\|_\infty^2}.
\end{equation}
The global RRMSE defined in~\eqref{eq:rrmse} is recovered as the joint average of the SRSE over the spatial mesh and the test set:
$
\mathrm{RRMSE}^2
= \frac{1}{N_{\mathrm{test}}\, S}
  \sum_{i=1}^{N_{\mathrm{test}}} \sum_{j=1}^{S}
  \mathrm{SRSE}\bigl(\hat{y}^{(i)}(\bm{\nu}_j), y^{(i)}(\bm{\nu}_j)\bigr).
$

\subsubsection{Results}

\paragraph{A priori diagnostic and predictive accuracy}
\begin{figure}
  \centering
  \includegraphics
  [width=\textwidth]
  {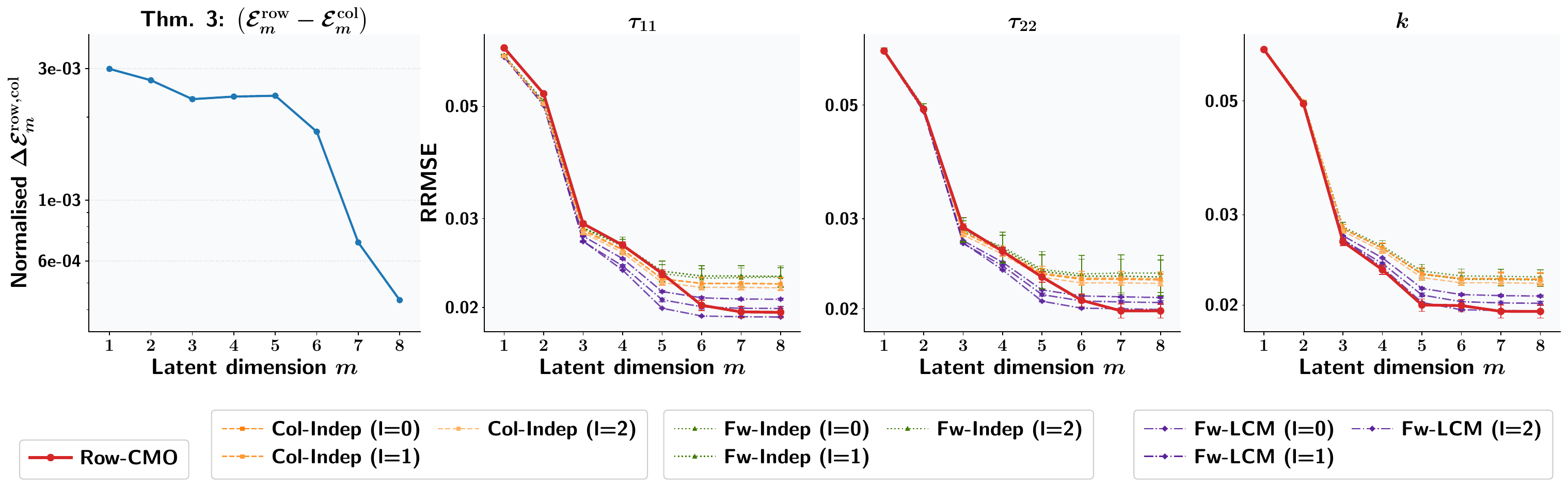}
  \caption{Left: dominance criterion of Theorem~\ref{thm:dominance} for the CFD test case, showing the normalized difference $\text{Tr}(\mathbf{D}_{\text{col},m}^2) - \text{Tr}(\mathbf{D}_{\text{row},m}^2)$ as a function of $m$. The small and decreasing values confirm near-equivalence of the two PCA strategies. Right: RRMSE on each output field ($\tau_{11}$, $\tau_{22}$, $k$) for all methods ($m = 1, \ldots, 10$). \texttt{Row-CMO} achieves the lowest RRMSE at $m \geq 6$. Error bars represent $\pm 1$ standard deviation over $10$ seeds.}
  \label{fig:cfd_thm3_rrmse}
\end{figure}
Figure~\ref{fig:cfd_thm3_rrmse} displays the dominance criterion of Theorem~\ref{thm:dominance} (left panel) alongside the RRMSE of all methods on the three output fields as a function of $m$ (right panels). The criterion serves here as a practical diagnostic: it is evaluated directly from the training data, before any GP model is trained, and indicates whether the row-wise PCA strategy is competitive with column-wise for this dataset. The values are positive but small (on the order of $10^{-3}$) and monotonically decreasing, confirming that the column-wise PCA on $Q$ fields captures only marginally more variance than row-wise, and that the two strategies are essentially equivalent for this problem. This is a direct consequence of a strong spatial correlation among the three fields.
Figure~\ref{fig:cfd_Fields} provides visual confirmation: the spatial structures of $\tau_{11}$, $\tau_{22}$, and $k$ are clearly similar, with intensity concentrated in the recirculation zone of the diffuser.
 The RRMSE panels confirm this diagnostic. At small $m$, all methods achieve very similar error levels unlike the Lotka--Volterra case where \texttt{Row-CMO} started above the alternatives. This alignment is expected: when the dominance criterion indicates near-equivalence from the outset, the PCA strategies contribute similarly to the reconstruction quality, regardless of whether row-wise, column-wise, or field-wise is used. As $m$ increases beyond $m \approx 5$, \texttt{Row-CMO} progressively separates from the alternatives and achieves the lowest RRMSE on all three outputs at $m = 8$. This pattern is consistent with the observations on the Lotka--Volterra problem: once the PCA strategies are equivalent, the generalization advantage of \texttt{Row-CMO}, the information-pooling effect of the shared basis and the analytical constraint encoding of the CMOGP drives the performance gap, especially in this regime where only $N = 50$ training points are available.
\begin{figure}
  \centering
  \includegraphics[width=\textwidth]{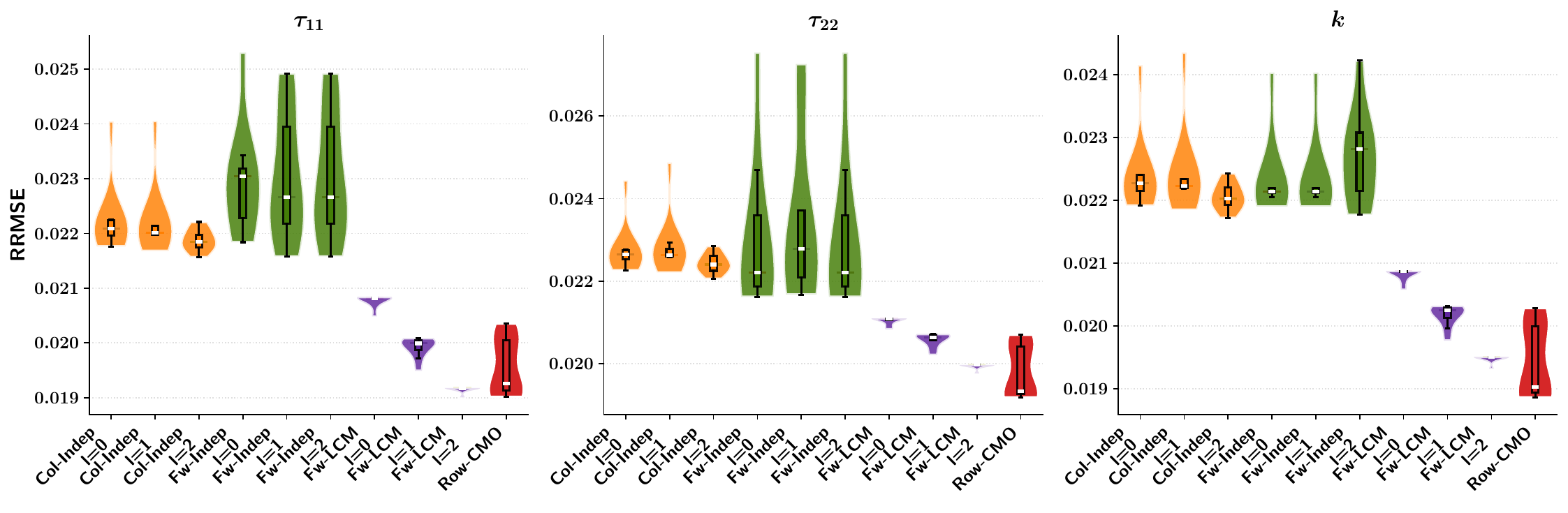}
  \caption{RRMSE at $m = 8$ for each output field ($\tau_{11}$, $\tau_{22}$, $k$) across all methods and deduction scenarios. \texttt{Row-CMO} achieves the lowest mean RRMSE with the smallest variability. The spread across deduction scenarios within each family (\texttt{Col-Indep}, \texttt{Fw-Indep}, \texttt{Fw-LCM} ) and the larger error bars of the deductive methods illustrate the residual sensitivity to the output selection, even in a regime of strong inter-field correlation.}
  \label{fig:cfd_bar}
\end{figure}
Figure~\ref{fig:cfd_bar} provides a detailed view at $m = 8$: \texttt{Row-CMO} achieves the lowest mean RRMSE on each output with a narrower variability across seeds. The deductive methods exhibit a tighter grouping of their variants than in the Lotka--Volterra case, which is expected given the strong inter-field correlation. Nevertheless, a residual sensitivity to the deduction scenario persists: the spread across variants within each deductive family (\texttt{Col-Indep}, \texttt{Fw-Indep}, \texttt{Fw-LCM} ) remains visible. This confirms on a realistic industrial problem, that the sensitivity of the deductive approach to the arbitrary output selection does not vanish even when the fields are strongly correlated.

\paragraph{Spatial distribution of predictions and errors}
Figure~\ref{fig:cfd_Fields} illustrates the predictive quality of \texttt{Row-CMO} for a representative test input, using $M = 8$ latent dimensions. The left two columns compare the simulated and predicted fields for $\tau_{11}$, $\tau_{22}$, and $k$: the model accurately reproduces the spatial structure of all three fields, including the intensity and extent of the recirculation zone. The third column displays the predictive standard deviation, which is highest in the recirculation region where the flow dynamics are most sensitive to the input parameters and lowest in the upstream channel, reflecting the model's ability to identify regions of high and low predictive confidence. The rightmost column shows the SRSE: the prediction errors are localized in the same recirculation region, with magnitudes on the order of $10^{-4}$ to $10^{-3}$, confirming the overall accuracy of the model.
 
\begin{figure}
  \centering
  \includegraphics[width=\textwidth]{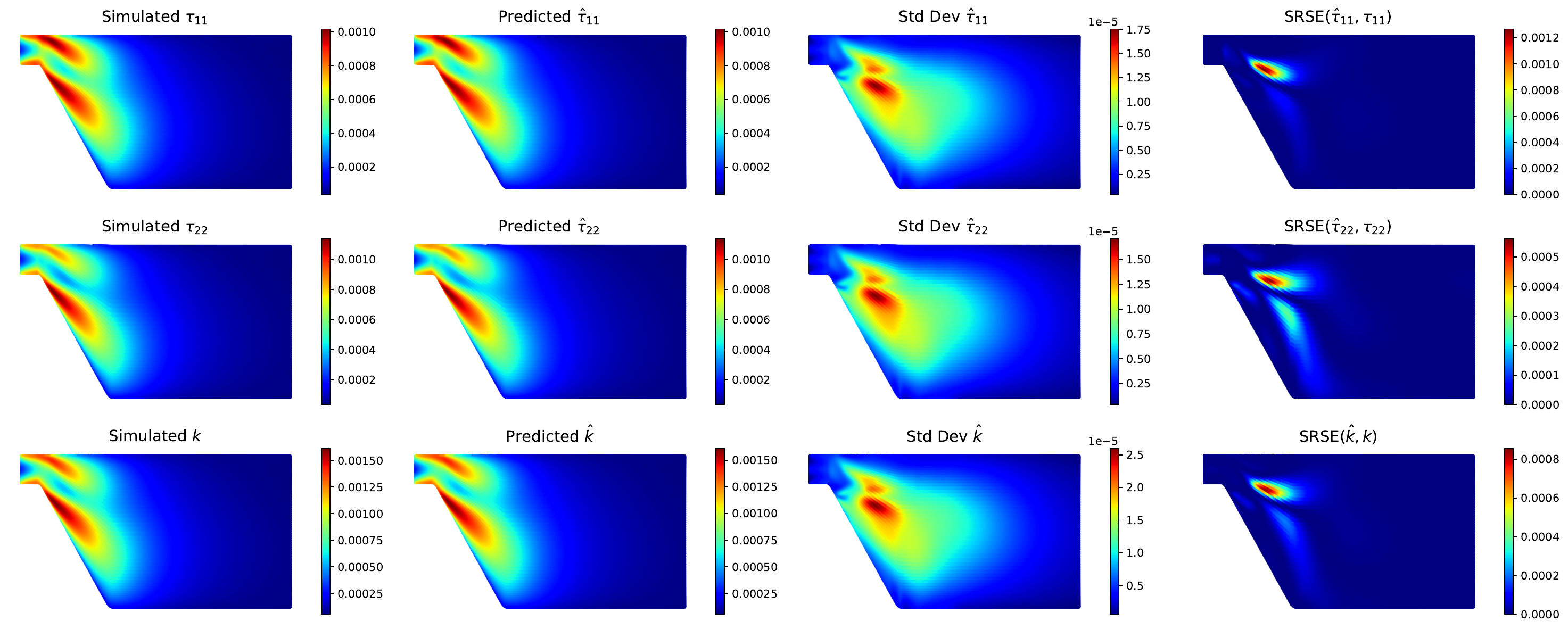}
  \caption{\texttt{Row-CMO} predictions on the CFD test case for a representative test input ($M = 8$). From top to bottom: $\tau_{11}$, $\tau_{22}$, $k$. Left to right: simulated field, predicted field, predictive standard deviation, and SRSE. The model accurately captures the spatial structure. Errors and uncertainty are concentrated in the recirculation zone of the diffuser.}
  \label{fig:cfd_Fields}
\end{figure}

These results confirm that the proposed approach is well suited to this class of problems where the output fields share strong spatial correlations. In this regime, the row-wise PCA incurs a negligible reconstruction cost, and the generalization advantage of \texttt{Row-CMO} enables it to outperform the deductive alternatives at moderate-to-high latent dimensions, while guaranteeing strict constraint satisfaction.
\section{Conclusion}\label{sec:conclusion}
 
We propose a framework for the simultaneous prediction of multiple high-dimensional physical fields governed by linear equality constraints. The approach combines a row-wise PCA strategy, which uniquely preserves the constraint structure in the latent space with constrained multi-output GP regression based on a kernel parametrization that analytically encodes the constraint.
The motivation for joint constrained modeling, as opposed to a trivial deductive approach, was established through a benchmark on scalar-output functions. This benchmark demonstrated that the deductive strategy introduces a sensitivity to the arbitrary choice of which output to deduce, affecting both predictive accuracy and uncertainty quantification, and that this sensitivity persists regardless of the regression method used for the remaining outputs. The constrained MOGP avoids these issues by treating all outputs symmetrically and encoding the constraint directly in the covariance structure.
To assess the reconstruction cost of the constraint-preserving row-wise PCA relative to the more common column-wise and field-wise strategies, we provided a theoretical analysis based on perturbation bounds and an exact dominance criterion. 
The framework was validated on a population dynamics problem based on the Lotka--Volterra system and on an industrial CFD application involving the prediction of Reynolds stress tensor components under the incompressibility constraint. In both cases, our method achieved competitive or superior predictive performance compared to deductive alternatives combining column-wise, field-wise, or LCM-based regression with output deduction. The generalization advantage of the proposed method was most pronounced in data-scarce regimes, where the information-pooling effect of the shared spatial basis and the analytical constraint encoding jointly contributed to improved out-of-sample accuracy.
 
Several directions for future work can be identified. First, the parametrization approach in latent space should make it possible to handle settings where the different outputs fields are not observed at the same input locations, by exploiting the constraint structure to couple the fields. Second, extending the methodology to spatio-temporal data from physical simulations, following the approach of \citet{Mak2018} is a natural perspective. Third, while the present framework handles linear equality constraints, many physical systems involve also inequality constraints (e.g, positivity of energy or concentration fields). Extending the parametrization approach to such settings would broaden the applicability of the method. Finally, for problems exhibiting strong nonlinearities in the output fields, the linear PCA step may become a bottleneck, and coupling constrained GP regression with nonlinear dimensionality reduction techniques such as autoencoders or kernel PCA, represents a promising but challenging research direction.

\section{Declaration of generative AI and AI-assisted technologies in the manuscript preparation process}
During the preparation of this work the authors used Claude in order to improve software development for the numerical experiments. After using Claude, the authors reviewed and edited the content as needed and take full responsibility for the content of the published article.

\section{Acknowledgments} 
This work has received funding from the France 2030 NumPEx Exa-MA (ANR-22-EXNU-0002) project managed by the French National Research Agency (ANR) 

\newpage
\appendix


\section{Proofs of the theoretical results}\label{app:proofs}

We recall the Davis--Kahan $\sin\Theta$ theorem \citep{DavisKahan1970, StewartSun1990} in the form used throughout our proofs.

\begin{theorem}[Davis--Kahan]\label{thm:DK}
Let $\boldsymbol{\Sigma}_A, \boldsymbol{\Sigma}_B \in \mathbb{R}^{S \times S}$ be symmetric matrices with eigenvalues $\lambda_1^A \geq \cdots \geq \lambda_S^A$ and $\lambda_1^B \geq \cdots \geq \lambda_S^B$. Let $\mathbf{A} = (\mathbf{a}_1, \ldots, \mathbf{a}_m)$ and $\mathbf{B} = (\mathbf{b}_1, \ldots, \mathbf{b}_m)$ be the matrices of the first $m$ eigenvectors of $\boldsymbol{\Sigma}_A$ and $\boldsymbol{\Sigma}_B$ respectively, and let $\delta_A = \lambda_m^A - \lambda_{m+1}^A > 0$. Then:
\begin{equation}\label{eq:DK}
  \|\sin\Theta(\mathbf{A}, \mathbf{B})\|_F \leq \frac{\min\bigl\{\sqrt{m}\,\|\boldsymbol{\Sigma}_B - \boldsymbol{\Sigma}_A\|_{\text{op}},\;\|\boldsymbol{\Sigma}_B - \boldsymbol{\Sigma}_A\|_F\bigr\}}{\delta_A},
\end{equation}
where $\Theta(\mathbf{A}, \mathbf{B})$ denotes the diagonal matrix of principal angles between the column spaces of $\mathbf{A}$ and $\mathbf{B}$.
\end{theorem}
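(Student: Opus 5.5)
The plan is the classical Sylvester-equation argument behind the $\sin\Theta$ theorem, combined with Weyl's inequality to turn an eigenvalue separation into a statement involving only the gap $\delta_A$ of $\boldsymbol{\Sigma}_A$.

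\textbf{Setup.} Write $\mathbf{E}=\boldsymbol{\Sigma}_B-\boldsymbol{\Sigma}_A$. Let $\mathbf{A}_\perp\in\mathbb{R}^{S\times(S-m)}$ collect the remaining orthonormal eigenvectors of $\boldsymbol{\Sigma}_A$, with eigenvalue matrix $\boldsymbol{\Lambda}_{A,\perp}=\mathrm{diag}(\lambda_{m+1}^A,\dots,\lambda_S^A)$. Let $\boldsymbol{\Lambda}_B=\mathrm{diag}(\lambda_1^B,\dots,\lambda_m^B)$. The standard identity is $\|\sin\Theta(\mathbf{A},\mathbf{B})\|_F=\|\mathbf{A}_\perp^\top\mathbf{B}\|_F$. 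Multiplying $\boldsymbol{\Sigma}_B\mathbf{B}=\mathbf{B}\boldsymbol{\Lambda}_B$ on the left by $\mathbf{A}_\perp^\top$, and using $\mathbf{A}_\perp^\top\boldsymbol{\Sigma}_A=\boldsymbol{\Lambda}_{A,\perp}\mathbf{A}_\perp^\top$, gives the Sylvester equation
\[
\mathbf{X}\boldsymbol{\Lambda}_B-\boldsymbol{\Lambda}_{A,\perp}\mathbf{X}=\mathbf{A}_\perp^\top\mathbf{E}\mathbf{B},\qquad \mathbf{X}=\mathbf{A}_\perp^\top\mathbf{B}.
\]

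\textbf{Solving entrywise.} Both coefficient matrices are diagonal, so entrywise $X_{ij}(\lambda_j^B-\lambda_{m+i}^A)=(\mathbf{A}_\perp^\top\mathbf{E}\mathbf{B})_{ij}$. If every $\lambda_j^B$ with $j\le m$ is separated from every $\lambda_{m+i}^A$ by at least $\eta>0$, then
\[
\|\mathbf{X}\|_F\le \frac{\|\mathbf{A}_\perp^\top\mathbf{E}\mathbf{B}\|_F}{\eta}.
\]
The numerator is at most $\|\mathbf{E}\|_F$, since multiplying by matrices with orthonormal columns does not increase the Frobenius norm. It is also at most $\|\mathbf{E}\mathbf{B}\|_F\le\sqrt{m}\,\|\mathbf{E}\|_{\text{op}}$, because $\mathbf{B}$ has $m$ orthonormal columns. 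Together these produce the $\min\{\sqrt m\,\|\mathbf{E}\|_{\text{op}},\|\mathbf{E}\|_F\}$ term in \eqref{eq:DK}.

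\textbf{Main obstacle: the separation $\eta$.} This step is where the hypothesis only involves $\delta_A$, a gap of $\boldsymbol{\Sigma}_A$ alone. Weyl's inequality gives $\lambda_j^B\ge\lambda_m^A-\|\mathbf{E}\|_{\text{op}}$ for $j\le m$, hence $\eta\ge\delta_A-\|\mathbf{E}\|_{\text{op}}$. I would then split into two regimes.
\begin{itemize}
\item If $\|\mathbf{E}\|_{\text{op}}\le\delta_A/2$, then $\eta\ge\delta_A/2$ and the bound of the previous step applies.
\item If $\|\mathbf{E}\|_{\text{op}}>\delta_A/2$, the trivial bound $\|\sin\Theta\|_F\le\sqrt m$ already suffices, because then $\sqrt m\le 2\sqrt m\,\|\mathbf{E}\|_{\text{op}}/\delta_A$. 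The Frobenius branch is covered similarly: $\|\sin\Theta\|_F^2\le\|\mathbf{A}_\perp^\top\mathbf{B}\|_F^2\le\|\mathbf{E}\|_F^2/\ldots$ follows in the same way, or directly from $\|\mathbf{E}\|_F\ge\|\mathbf{E}\|_{\text{op}}$ together with $\|\sin\Theta\|_F\le\sqrt m$, after comparing with the operator branch.
\end{itemize}

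\textbf{Caveat on the constant.} This route, which is the Yu--Wang--Samworth form of Davis--Kahan, yields \eqref{eq:DK} with a factor $2$ in the numerator. I expect that, with $\delta_A$ as the only gap hypothesis, the constant cannot in general be removed. So I would either prove the statement with the factor $2$ and check that it is absorbed into the constants of Theorems~\ref{thm:row}--\ref{thm:col}, or strengthen the hypothesis to a two-sided separation $\min_{j\le m}\lambda_j^B-\lambda_{m+1}^A\ge\delta_A$, under which the constant-free version follows directly from the Sylvester step.
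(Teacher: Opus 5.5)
The paper gives no proof of this statement; it only recalls it from Davis--Kahan (1970) and Stewart--Sun (1990). Your caveat about the constant is correct, and the statement as written is in fact false. Take $S=2$, $m=1$, $\boldsymbol{\Sigma}_A=\mathrm{diag}(1,0)$ and $\boldsymbol{\Sigma}_B=\mathrm{diag}(1/2,\,1/2+t)$ with $0<t<1/2$; both are valid covariance matrices. Then $\delta_A=1$, and the leading eigenvectors are $\mathbf{e}_1$ and $\mathbf{e}_2$, so $\|\sin\Theta\|_F=1$. On the other hand $\min\{\|\mathbf{E}\|_{\text{op}},\|\mathbf{E}\|_F\}=1/2+t<1$. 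With only the one-sided gap $\delta_A$, the correct form is the Yu--Wang--Samworth variant with a factor $2$, which this example shows is sharp as $t\to 0$. The constant-$1$ form needs the mixed separation you mention. The factor $2$ is absorbed exactly downstream. The proof of Theorem~\ref{thm:row} as written only derives $N\sqrt{2}\,\|\boldsymbol{\Sigma}_k\|_F\min\{\cdots\}/\delta_k$, and the stated constant $2N\sqrt{2}$ is precisely what the factor-$2$ version produces; the same holds for Theorem~\ref{thm:col}. So proving the factor-$2$ version is the right choice. Your two-sided-separation alternative would instead change the hypotheses of those theorems, since they would then involve the spectrum of $\boldsymbol{\Sigma}_{\text{row}}$ or $\mathbf{K}_{\text{col}}$.

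There is, however, a genuine gap: your argument does not cover the Frobenius branch when $\|\mathbf{E}\|_{\text{op}}>\delta_A/2$.

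\begin{enumerate}
\item In that regime the Weyl separation $\delta_A-\|\mathbf{E}\|_{\text{op}}$ can be zero or negative, so the Sylvester step with $\boldsymbol{\Lambda}_B$ yields nothing. Your ``$\|\mathbf{E}\|_F^2/\ldots$'' has no valid denominator.
\item The trivial bound $\sqrt m$ is not dominated by $2\|\mathbf{E}\|_F/\delta_A$. The inequality $\|\mathbf{E}\|_F\ge\|\mathbf{E}\|_{\text{op}}>\delta_A/2$ only makes the right-hand side exceed $1$. If $\mathbf{E}$ has one eigenvalue of modulus slightly above $\delta_A/2$ and the rest negligible, then $2\|\mathbf{E}\|_F/\delta_A\approx 1\ll\sqrt m$, so the comparison cannot close this case.
\end{enumerate}

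The remedy is to keep the \emph{unperturbed} eigenvalues on both sides of the Sylvester equation and move the eigenvalue shift to the right-hand side. With $\boldsymbol{\Lambda}_A=\mathrm{diag}(\lambda_1^A,\dots,\lambda_m^A)$, your equation rewrites as
\[
\mathbf{X}\boldsymbol{\Lambda}_A-\boldsymbol{\Lambda}_{A,\perp}\mathbf{X}=\mathbf{A}_\perp^\top\mathbf{E}\mathbf{B}-\mathbf{X}(\boldsymbol{\Lambda}_B-\boldsymbol{\Lambda}_A).
\]
The left-hand side has entries $(\lambda_j^A-\lambda_{m+i}^A)X_{ij}$ with $\lambda_j^A-\lambda_{m+i}^A\ge\delta_A$. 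Since $\|\mathbf{X}\|_{\text{op}}\le 1$, this gives
\[
\delta_A\|\mathbf{X}\|_F\le\|\mathbf{E}\mathbf{B}\|_F+\|\boldsymbol{\Lambda}_B-\boldsymbol{\Lambda}_A\|_F .
\]
Each term on the right is bounded both by $\sqrt m\,\|\mathbf{E}\|_{\text{op}}$ (using Weyl for the second term) and by $\|\mathbf{E}\|_F$ (using Hoffman--Wielandt, $\sum_{j}(\lambda_j^B-\lambda_j^A)^2\le\|\mathbf{E}\|_F^2$, for the second term). This gives $\|\sin\Theta\|_F\le 2\min\{\sqrt m\,\|\mathbf{E}\|_{\text{op}},\|\mathbf{E}\|_F\}/\delta_A$ for both branches at once, with no case split. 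This is the Yu--Wang--Samworth argument.
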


We also use the relation between the projector distance and the principal angles: for projectors $\mathbf{P}_A = \mathbf{A}\mathbf{A}^\top$ and $\mathbf{P}_B = \mathbf{B}\mathbf{B}^\top$ of rank $m$,
\begin{equation}\label{eq:proj_angle}
  \|\mathbf{P}_A - \mathbf{P}_B\|_F = \sqrt{2}\,\|\sin\Theta(\mathbf{A}, \mathbf{B})\|_F.
\end{equation}

\subsection{Proof of Theorem~\ref{thm:row} (Row-wise excess error bound)}

\begin{proof}\:  Using the trace formulation and the idempotence of projectors:
\begin{align}
  \Delta E_{k,m}^{(\text{row})} &= \|\mathbf{Y}_k(\mathbf{I}_S - \mathbf{P}_{\text{row},m})\|_F^2 - \|\mathbf{Y}_k(\mathbf{I}_S - \mathbf{P}_{k,m})\|_F^2 \nonumber \\
  &= \text{Tr}\bigl(N\boldsymbol{\Sigma}_k(\mathbf{P}_{k,m} - \mathbf{P}_{\text{row},m})\bigr). \label{eq:proof_step1}
\end{align}
By the Cauchy--Schwarz inequality for the trace inner product:
\begin{equation}\label{eq:proof_CS}
  \Delta E_{k,m}^{(\text{row})} \leq N\,\|\boldsymbol{\Sigma}_k\|_F\,\|\mathbf{P}_{k,m} - \mathbf{P}_{\text{row},m}\|_F.
\end{equation}
Now, we apply Theorem~\ref{thm:DK} with $\boldsymbol{\Sigma}_A = \boldsymbol{\Sigma}_k$ and $\boldsymbol{\Sigma}_B = \boldsymbol{\Sigma}_{\text{row}}$. The perturbation is:
\[
  \boldsymbol{\Delta\Sigma}_k = \boldsymbol{\Sigma}_{\text{row}} - \boldsymbol{\Sigma}_k = \frac{1}{Q}\sum_{j=1}^Q (\boldsymbol{\Sigma}_j - \boldsymbol{\Sigma}_k).
\]
Using~\eqref{eq:proj_angle}, the projector distance satisfies:
\begin{equation}\label{eq:proof_proj}
  \|\mathbf{P}_{k,m} - \mathbf{P}_{\text{row},m}\|_F = \sqrt{2}\,\|\sin\Theta(\mathbf{V}_{k,m}, \mathbf{V}_{\text{row},m})\|_F \leq \frac{\sqrt{2}\,\min\bigl\{\sqrt{m}\,\|\boldsymbol{\Delta\Sigma}_k\|_{\text{op}},\;\|\boldsymbol{\Delta\Sigma}_k\|_F\bigr\}}{\delta_k}.
\end{equation}
Substituting~\eqref{eq:proof_proj} into~\eqref{eq:proof_CS}:
\[
  \Delta E_{k,m}^{(\text{row})} \leq \frac{2N\sqrt{2}\,\|\boldsymbol{\Sigma}_k\|_F}{\delta_k} \cdot \min\bigl\{\sqrt{m}\,\|\boldsymbol{\Delta\Sigma}_k\|_{\text{op}},\;\|\boldsymbol{\Delta\Sigma}_k\|_F\bigr\}. \qedhere
\]
\end{proof}

\subsection{Proof of Theorem~\ref{thm:col} (Column-wise excess error bound)}

\begin{proof}
The argument is analogous to Theorem~\ref{thm:row} but operates in the observation space. Using the equivalence of left and right projections for the SVD ($\mathbf{Y}_k\mathbf{P}_{k,m} = \boldsymbol{\Pi}_{k,m}\mathbf{Y}_k$), the excess error becomes:
\begin{equation}
  \Delta E_{k,m}^{(\text{col})} = \|(\mathbf{I}_N - \boldsymbol{\Pi}_{\text{col},m})\mathbf{Y}_k\|_F^2 - \|(\mathbf{I}_N - \boldsymbol{\Pi}_{k,m})\mathbf{Y}_k\|_F^2 = \text{Tr}\bigl(S\mathbf{K}_k(\boldsymbol{\Pi}_{k,m} - \boldsymbol{\Pi}_{\text{col},m})\bigr).
\end{equation}
By Cauchy--Schwarz: 
\begin{equation}
    \Delta E_{k,m}^{(\text{col})} \leq S\,\|\mathbf{K}_k\|_F\,\|\boldsymbol{\Pi}_{k,m} - \boldsymbol{\Pi}_{\text{col},m}\|_F
    \label{eqn:col_CS}
\end{equation}
Then we apply Theorem~\ref{thm:DK}  with $\boldsymbol{\Sigma}_A = \mathbf{K}_k$ and $\boldsymbol{\Sigma}_B = \mathbf{K}_{\text{col}}$. The perturbation is $\boldsymbol{\Delta K}_k = \mathbf{K}_{\text{col}} - \mathbf{K}_k = \frac{1}{Q}\sum_j (\mathbf{K}_j - \mathbf{K}_k)$, and:
\begin{equation}
     \|\boldsymbol{\Pi}_{k,m} - \boldsymbol{\Pi}_{\text{col},m}\|_F \leq \frac{\sqrt{2}\,\min\bigl\{\sqrt{m}\,\|\boldsymbol{\Delta K}_k\|_{\text{op}},\;\|\boldsymbol{\Delta K}_k\|_F\bigr\}}{\delta_k^{(\text{col})}}.
     \label{eq:Pik_bound}
\end{equation}
Substituting \eqref{eq:Pik_bound} into \eqref{eqn:col_CS} yields:
\[
  \Delta E_{k,m}^{(\text{col})} \leq \frac{2S\sqrt{2}\,\|\mathbf{K}_k\|_F}{\delta_k^{(\text{col})}} \cdot \min\bigl\{\sqrt{m}\,\|\boldsymbol{\Delta K}_k\|_{\text{op}},\;\|\boldsymbol{\Delta K}_k\|_F\bigr\}. \qedhere
\]
\end{proof}

\subsection{Proof of Theorem~\ref{thm:dominance} (Row-wise vs.\ column-wise)}

\begin{proof}
Both global errors decompose as total energy minus explained variance. For the row-wise approach:
\[
  E_m^{(\text{row})} = \|\mathbf{Y}_{\text{row}}\|_F^2 - \text{Tr}(\mathbf{D}_{\text{row},m}^2),
\]
and for the column-wise approach:
\[
  E_m^{(\text{col})} = \|\mathbf{Y}_{\text{col}}\|_F^2 - \text{Tr}(\mathbf{D}_{\text{col},m}^2).
\]
Since $\mathbf{Y}_{\text{row}}$ and $\mathbf{Y}_{\text{col}}$ contain exactly the same elements rearranged, their Frobenius norms coincide: $\|\mathbf{Y}_{\text{row}}\|_F^2 = \|\mathbf{Y}_{\text{col}}\|_F^2$. Subtracting yields
\[
  E_m^{(\text{col})} - E_m^{(\text{row})} = \text{Tr}(\mathbf{D}_{\text{row},m}^2) - \text{Tr}(\mathbf{D}_{\text{col},m}^2).
\]
Since $\text{Tr}(\mathbf{D}_{\text{row},m}^2) = NQ\sum_{p=1}^m \lambda_p^{(\text{row})}$ and $\text{Tr}(\mathbf{D}_{\text{col},m}^2) = QS\sum_{p=1}^m \lambda_p^{(\text{col})}$, the equivalence~\eqref{eq:dominance_eig} follows.
\end{proof}

\subsection{Proof of Corollary~\ref{cor:zero}}

\begin{proof}
If $\mathbf{V}_j = \mathbf{V}_0$ for all $j$, then $\boldsymbol{\Sigma}_j = \mathbf{V}_0(\frac{1}{N}\mathbf{D}_j^2)\mathbf{V}_0^\top$ and $\boldsymbol{\Sigma}_{\text{row}} = \mathbf{V}_0(\frac{1}{NQ}\sum_j \mathbf{D}_j^2)\mathbf{V}_0^\top$. Since $\frac{1}{NQ}\sum_j \mathbf{D}_j^2$ is diagonal, $\mathbf{V}_0$ is also the eigenvector matrix of $\boldsymbol{\Sigma}_{\text{row}}$. Truncating at rank $m$ then yields $\mathbf{V}_{\text{row},m} = \mathbf{V}_{0,m} = \mathbf{V}_{k,m}$, hence $\mathbf{P}_{\text{row},m} = \mathbf{P}_{k,m}$ and $\Delta E_{k,m}^{(\text{row})} = 0$. The column-wise case is analogous with $\mathbf{U}$ and $\mathbf{K}$.
\end{proof}


\section[
  Uncertainty quantification of the standard k-epsilon turbulence model parameters
]{%
  Uncertainty quantification of the standard \ensuremath{k\text{–}\varepsilon} turbulence model parameters
}\label{app:uq_turb_model}

The standard $k$–$\varepsilon$ turbulence model \citep{Launder1974} is widely used in the Computational Fluid Dynamics (CFD) community. It provides reliable predictions for a wide range of flows, including industrial applications, while maintaining reasonable computational cost. The principal hypotheses underlying the model are as follows:
\begin{itemize}
    \item Turbulence has the same effect as viscous forces. Hence, an eddy viscosity coefficient $\nu_t$ is introduced. It varies spatially depending on the local flow conditions.
    \item The eddy viscosity can be expressed as a function of turbulent kinetic energy $k$ and turbulent dissipation rate $\varepsilon$. A dimensional analysis gives:
    \begin{equation}\label{eq:nu_t_def}
        \nu_t = C_{\mu}\frac{k^2}{\varepsilon}
    \end{equation}
    where $C_{\mu}$ is a constant calibrated on experimental data.
\end{itemize}
The turbulent kinetic energy is modeled by a transport equation that can be derived exactly from the Reynolds decomposition of the flow velocity $u_i = \bar{u}_i + u'_i$. The exact transport equation for the turbulent dissipation rate is highly complex and is therefore represented in a simplified modeled form. The equations of the standard $k$–$\varepsilon$ model are written as follows:

\begin{equation}
\left\{
\begin{array}{l}
\dfrac{\partial k}{\partial t} + \bar{u}_j \dfrac{\partial k}{\partial x_j} = \dfrac{\partial}{\partial x_j} \left[ \left( \nu + \dfrac{\nu_t}{\sigma_k} \right) \dfrac{\partial k}{\partial x_j} \right] + P_k - \varepsilon\\\\
\dfrac{\partial \varepsilon}{\partial t} + \bar{u} \dfrac{\partial \varepsilon}{\partial x_j} = \dfrac{\partial}{\partial x_j} \left[ \left( \nu + \dfrac{\nu_t}{\sigma_\varepsilon} \right) \dfrac{\partial \varepsilon}{\partial x_j} \right] + \dfrac{\varepsilon}{k} (C_{\varepsilon_1} P_k - C_{\varepsilon_2} \varepsilon)
\end{array}
\right.
\end{equation}

where

\begin{equation}
P_k = - \tau_{ij} \frac{\partial \bar{u}_i}{\partial x_j}
\end{equation}

is the production of turbulent kinetic energy. In the equation for the turbulent dissipation rate, three parameters have to be calibrated:
\begin{itemize}
    \item the turbulent Prandtl number for dissipation $\sigma_{\varepsilon}$;
    \item the coefficient for the source term $C_{\varepsilon 1}$;
    \item the coefficient for the sink term $C_{\varepsilon 2}$.
\end{itemize}

Ultimately, the model contains five parameters requiring experimental or semi-analytical calibration, with their standard values reported in Table \ref{tab:keps_values}.
\begin{table} 
\centering
\begin{tabular}{ c|c|c|c|c } 
 $C_{\mu}$ & $\sigma_k$ & $\sigma_{\varepsilon}$ & $C_{\varepsilon_1}$ & $C_{\varepsilon_2}$ \\
 \hline
 0.09 & 1.00 & 1.30 & 1.44 & 1.92 \\ 
\end{tabular}
\caption{Usual values of the parameters of the standard $k-\varepsilon$ model.}
\label{tab:keps_values}
\end{table}

\subsection[C\_mu calibration]{Calibration of $C_\mu$}

In the case of a parallel shear flow, the $k-\varepsilon$ model equations can be simplified to yield the following expression of $C_{\mu}$:
\begin{equation}\label{eq:def_c_mu}
C_{\mu} = \frac{\varepsilon}{P_k}\left(\frac{\overline{u'_1u'_2}}{k}\right)^2 \ .
\end{equation}
Assuming local equilibrium between production and dissipation, it follows that $P_k = \varepsilon$. Therefore, $C_{\mu}$ can be computed as:
\begin{equation}\label{eq:reynolds_c_mu}
    C_{\mu} = \left(\frac{2 \tau_{12}}{\tau_{11}+\tau_{22}+\tau_{33}}\right)^2 \ .
\end{equation}

We can thus determine the value of $C_{\mu}$ and its uncertainties using the experimental data and information from \citet{Buice2000}.  
\begin{figure}
     \centering
     \begin{subfigure}[b]{0.7\textwidth}
         \centering
         \includegraphics[width=\textwidth]{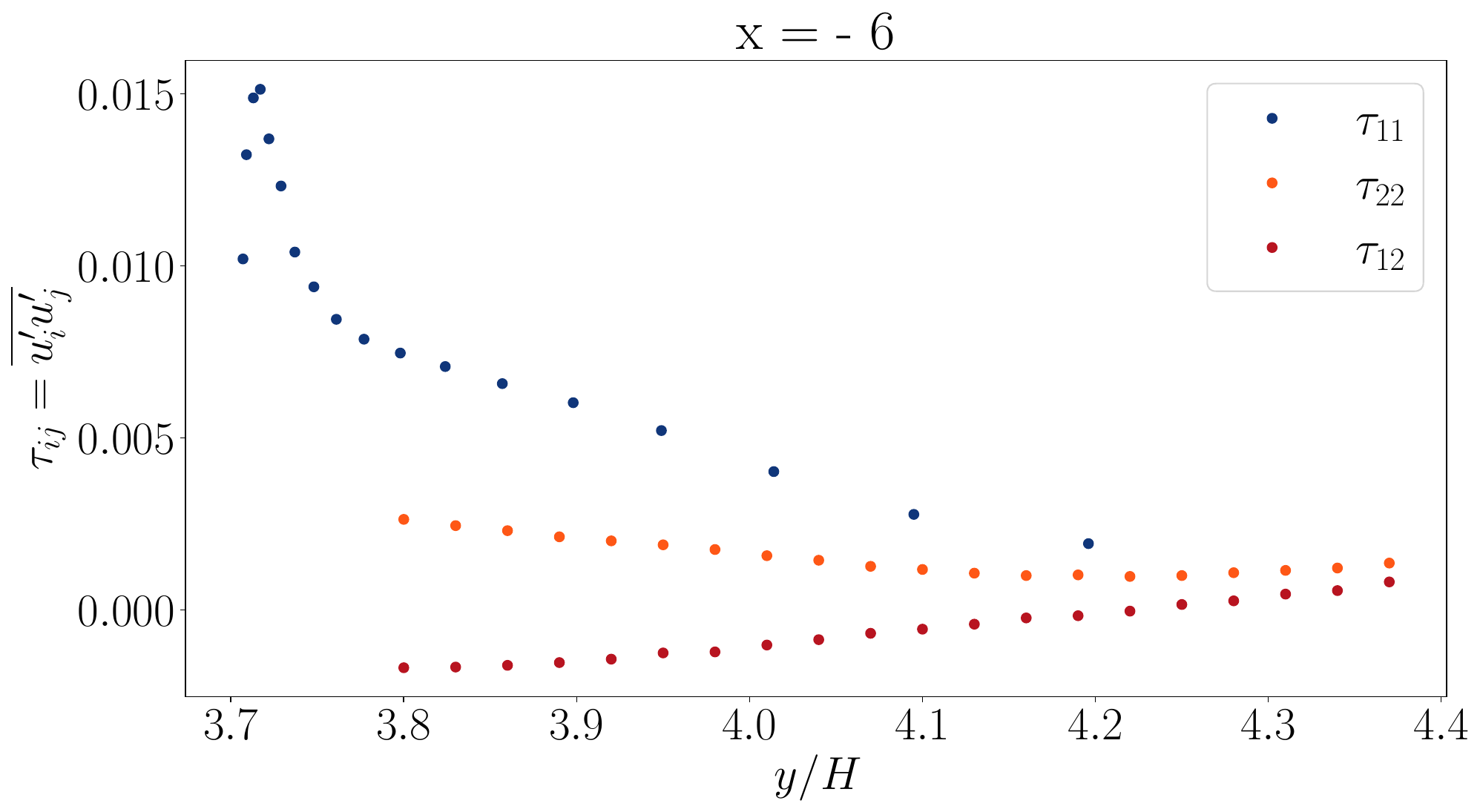}
         \caption{Experimental data from \citep{Buice2000}.}
         \label{fig:exp_data_Buice}
     \end{subfigure}
     \hfill
     \begin{subfigure}[b]{0.7\textwidth}
         \centering
         \includegraphics[width=\textwidth]{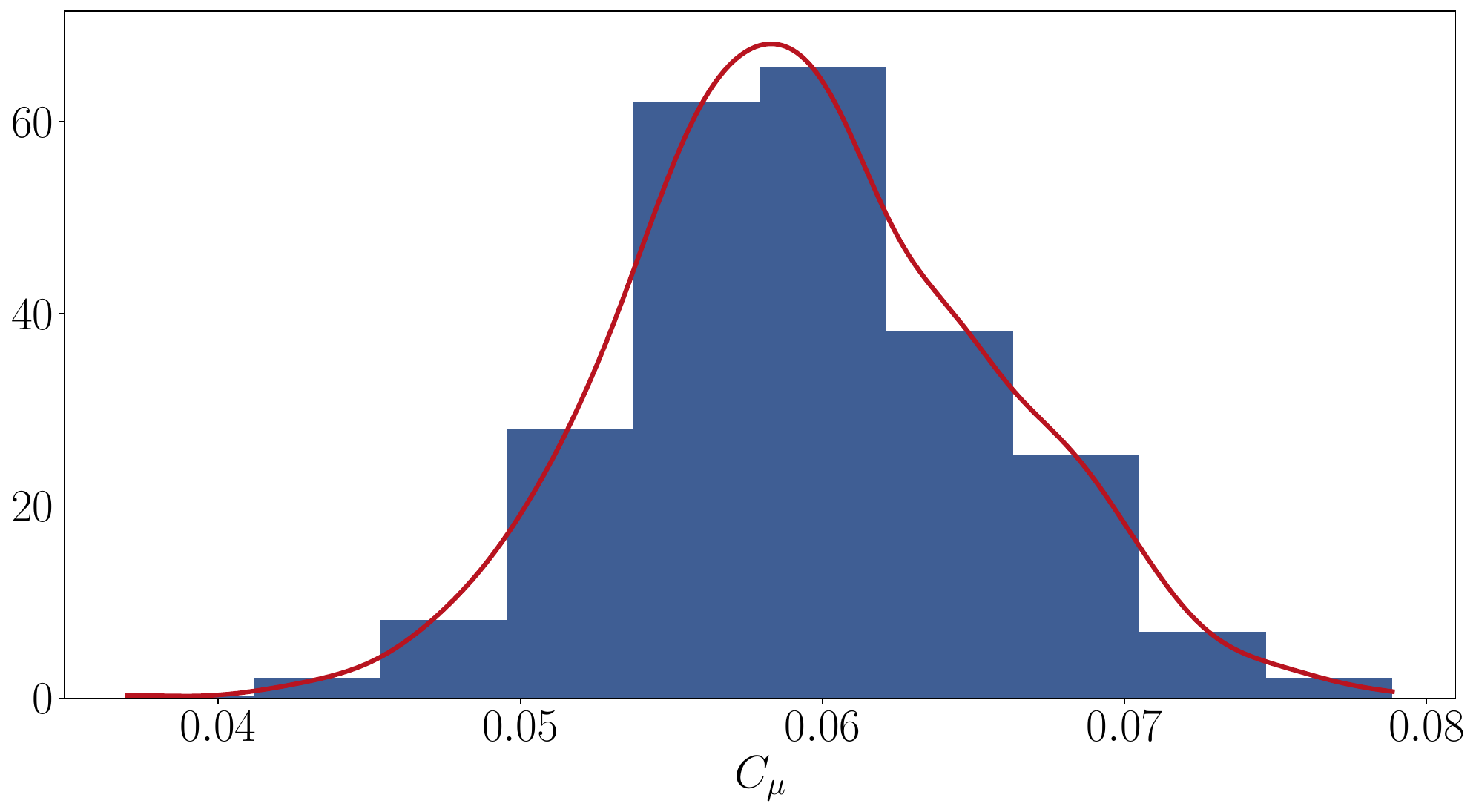}
         \caption{Probability density function (PDF) estimate of $C_{\mu}$.}
         \label{fig:pdf_C_mu}
     \end{subfigure}
     \caption{Experimental data of \citet{Buice2000} and probability density estimate of $C_{\mu}$}
\end{figure}
Figure~\ref{fig:exp_data_Buice} shows measured values of the Reynolds stress tensor in the channel upstream of the diffuser, indexed by their position along the axis perpendicular to the flow direction. At $y/H = 4$, which is close to the channel wall, the local equilibrium hypothesis may hold. A nominal value of $C_{\mu} = 0.06$ is obtained at this location, which is close to the classical value of $C_{\mu} = 0.09$.
\newline

In \citet{Buice2000}, information about the measurement uncertainties for the different component Reynolds stress tensor are reported as follows: $4\%$ for $\tau_{11}$, $5\%$ for $\tau_{22}$ and $10\%$ for $\tau_{12}$. Note that $\tau_{33}$ is not given in the Buice-Eaton experimental campaign data. An \textit{ad-hoc} value of $(\tau_{11} + \tau_{22})/2$ is used with an assigned uncertainty of $4\%$. Accordingly, the probability distributions of the $\tau_{ij}$ are assumed to be Gaussian, such that:
\begin{equation}\label{eq:pdf R_ij}
    \tau_{ij} \sim \mathcal{N}\left(\tau_{ij, {\rm mes}}, \frac{\Delta \tau_{ij} \times \tau_{ij, {\rm mes}}}{2}\right)
\end{equation}
where $\tau_{ij, {\rm mes}}$ are the measured values for $y/H=4$ specified in Table \ref{tab:Rij_values}, and $\Delta \tau_{ij}$ the uncertainties values for the corresponding component of the Reynolds stress tensor.
\begin{table}
\centering
\begin{tabular}{ c|c|c|c } 
 $\tau_{11}$ & $\tau_{22}$ & $\tau_{33}$ & $\tau_{12}$ \\
 \hline
 $4.020\text{e-}3$ & $1.576\text{e-}3$ & $2.798\text{e-}3$ & $1.020\text{e-}3$\\ 
\end{tabular}
\caption{Measured values for the Reynolds stress tensor component for $y/H = 4$.}
\label{tab:Rij_values}
\end{table}
Figure~\ref{fig:pdf_C_mu} shows the nonparametric estimate of the PDF of $C_{\mu}$, based on a simulated sample of 1,000 draws from the Gaussian distributions of the components $\tau_{ij}$, which are then used to compute $C_{\mu}$.
\subsection[C\_epsilon2 calibration]{$C_{\varepsilon_2}$ calibration}
The parameter $C_{\varepsilon_2}$ is calibrated for an isotropic, homogeneous turbulence regime in which the turbulent kinetic energy decays according to a power law $k(t) \propto t^{-\beta}$. In this case, the spatial gradients of the flow velocity $k$ and $\varepsilon$ are zero. From the standard $k$–$\varepsilon$ model, it follows that:
\begin{equation}\label{eq:def_c_eps_2}
    C_{\varepsilon_2} = \frac{\beta +1}{\beta}
\end{equation}
The probability density function of $\beta$ is inferred from Figure 1 of \cite{Meldi_Sagaut_2012}, which shows a histogram of a sample of 650 $\beta$ values obtained from experimental data and Direct Numerical Simulations \citep{Moin1998}. Based on this information, we assume that the power-law exponent $\beta$ follows a uniform distribution such that:
\begin{equation} \label{eq:beta_pdf}
    \beta \sim \mathcal{U}([0.95, 1.25]) \ .
\end{equation}
Figure \ref{fig:C_eps2_pdf} shows an histogram and a Kernel Density Estimation (KDE) using a Gaussian kernel on a 1,000-sized sample of $C_{\varepsilon_2}$ values, determined using a 1,000-sized sample of $\beta$ values using the distribution defined in Equation \eqref{eq:beta_pdf} and the relationship between $C_{\varepsilon_2}$ and $\beta$ given in Equation \eqref{eq:def_c_eps_2}.
\begin{figure}
    \centering
    \includegraphics[width=0.8\linewidth]{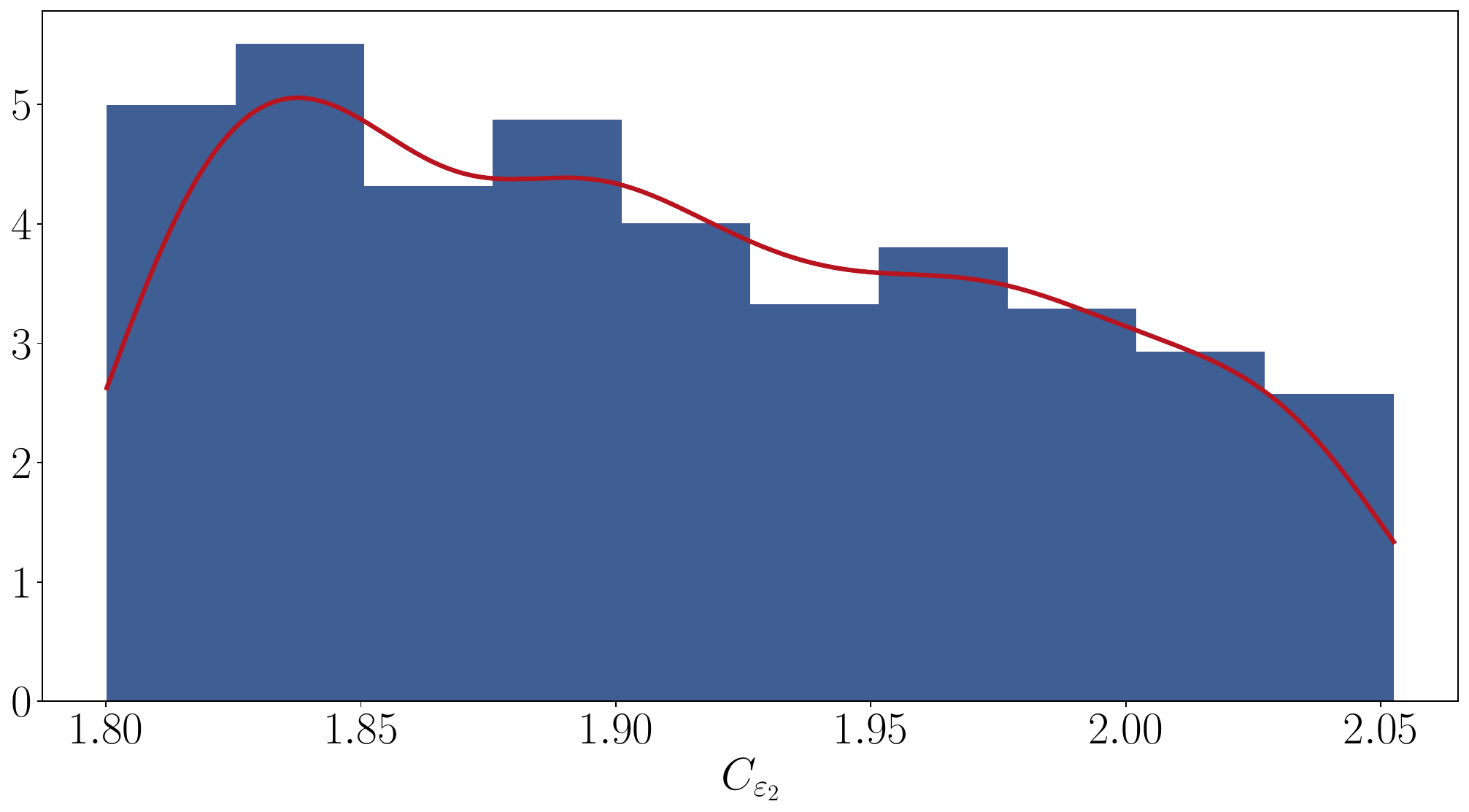}
    \caption{Histogram and gaussian KDE of the distribution of $C_{\varepsilon_2}$ using a 1,000-sized sample of $\beta$ values.}
    \label{fig:C_eps2_pdf}
\end{figure}

\subsection[Prandtl numbers sigma\_k and sigma\_epsilon]{Prandtl numbers $\sigma_k$ and $\sigma_{\varepsilon}$}
The parameters $\sigma_k$ and $\sigma_{\varepsilon}$ are assigned a $10\%$ uncertainty, as reported in \citet{Foken2006}. Additionally, the constraint $2\sigma_k - \sigma_{\varepsilon} \leq 1$ must be enforced to ensure the correct behavior of the standard $k$–$\varepsilon$ model at the turbulent/nonturbulent interface \citep{Cazalbou1994}. Accordingly, $\sigma_k$ and $\sigma_{\varepsilon}$ are assumed to be uniformly distributed over the ranges $[\sigma_k^* \times 0.9, \sigma_k^* \times 1.1]$ and $[\sigma_{\varepsilon}^* \times 0.9, \sigma_{\varepsilon}^* \times 1.1]$, respectively. From an i.i.d. sample $(\sigma_{k,i}, \sigma_{\varepsilon,i})_{1 \leq i \leq N}$, only those samples satisfying $2\sigma_{k,i} - \sigma_{\varepsilon,i} \leq 1$ are retained.

\subsection[C\_epsilon1 calibration]{$C_{\varepsilon_1}$ calibration}
Considering the fluid flow in the boundary layer and local equilibrium ($\varepsilon = P_k$), we obtain the following equation between $C_{\varepsilon_1}$ and $C_{\varepsilon_2}$
\begin{equation}\label{eq:C_eps_1_def}
    C_{\varepsilon_1} =  C_{\varepsilon_2} - \frac{\kappa^2}{\sigma_{\varepsilon}\sqrt{C_{\mu}}} \ ,
\end{equation}
where $\kappa$ is the Von Karman constant. A dataset of $n=8$ values are obtained from \citet{Foken2006}. A Gaussian is fitted on this dataset with its parameters $(m_{\kappa}, \sigma_{\kappa}^2)$ estimated empirically. 

\subsection[Sampling procedure for the standard k-epsilon turbulence model parameters]{Sampling procedure for the standard $k-\varepsilon$ turbulence model parameters}\label{sec:u_prop}
After properly assessing the sources of uncertainty in the $k$–$\varepsilon$ model, Algorithm \ref{alg:samp} outlines the procedure for sampling values of the standard $k$–$\varepsilon$ parameter vector $\mathbf{x} = (C_{\mu}, \sigma_k, \sigma_{\varepsilon}, C_{\varepsilon_1}, C_{\varepsilon_2})$.
\begin{algorithm}
	\caption{Sampling algorithm of the standard $k-\varepsilon$ turbulence model parameters.}
	\label{alg:samp} 
    \textit{Step 1:} Sample $(\sigma_k, \sigma_{\varepsilon})$ from their respective Uniform distribution conditioned by $2\sigma_k - \sigma_{\varepsilon} \leq 1$. \\ 
    \textit{Step 2:} Sample $\kappa \sim \mathcal{N}(m_{\kappa}, \sigma_{\kappa}^2)$. \\ 
    \textit{Step 3:} Sample $\beta \sim \mathcal{U}([0.95, 1.25])$. \\
    \textit{Step 4:} Sample $\tau_{11}, \ \tau_{22}, \ \tau_{12}, \ \tau_{33 }$ from Normal distributions with parameters determined using \citet{Buice2000}. \\
    \textit{Step 5:} Compute $C_{\varepsilon_2} = (\beta + 1) / \beta$. \\
    \textit{Step 6:} Compute $C_{\mu} = \left(\dfrac{2 \tau_{12}}{\tau_{11}+\tau_{22}+\tau_{33}}\right)^2$. \\
    \textit{Step 7:} Compute  $C_{\varepsilon_1} =  C_{\varepsilon_2} - \dfrac{\kappa^2}{\sigma_{\varepsilon}\sqrt{C_{\mu}}}$. \\
\end{algorithm}

\section{Complete win rate analysis of the deductive approach benchmark}\label{app:deductive}

This appendix complements the analysis of Section~\ref{sec:constraint} with the complete win rate comparison for all three outputs.
Figure~\ref{fig:winrate_all} reports the win rate analysis at
$N \in \{20, 50, 100\}$. Three observations stand out. First, the winrates of the two deductive methods change across deduction scenarios at every training set size, while the CMOGP varies as a mechanical consequence of its fixed RMSE. The variability of the deductive methods across scenarios is therefore present at $N = 20$, $50$, and $100$, and
does not vanish as the training set grows. Second, the independent GP has a markedly low win rate in the middle panel of each row at the scenario where $f_2$ is the deduced output, with values ranging from 6\% at $N = 20$ to $0\%$ at $N = 100$ : this is consistent with the
shift of its RMSE distribution towards higher values when $f_2$ is deduced. Third, and most importantly, the CMOGP is the most frequent best method in nearly all configurations and across all evaluated outputs, with a clear margin over the two deductive methods.

\begin{figure}
  \centering
  \begin{subfigure}{\textwidth}
    \centering
    \includegraphics[width=\textwidth]{figure/stacked_bars_N20.pdf}
    \caption{$N = 20$}
  \end{subfigure}

  \vspace{0.2cm}
  \begin{subfigure}{\textwidth}
    \centering
    \includegraphics[width=\textwidth]{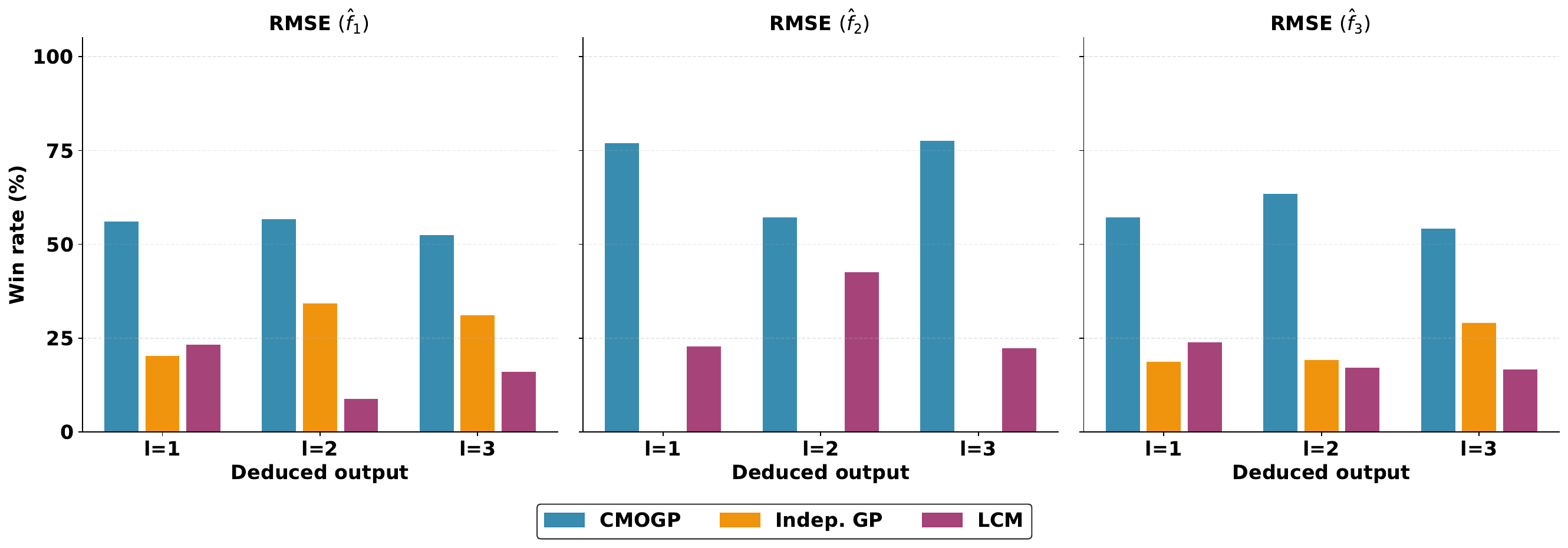}
    \caption{$N = 50$}
  \end{subfigure}

  \vspace{0.2cm}
  \begin{subfigure}{\textwidth}
    \centering
    \includegraphics[width=\textwidth]{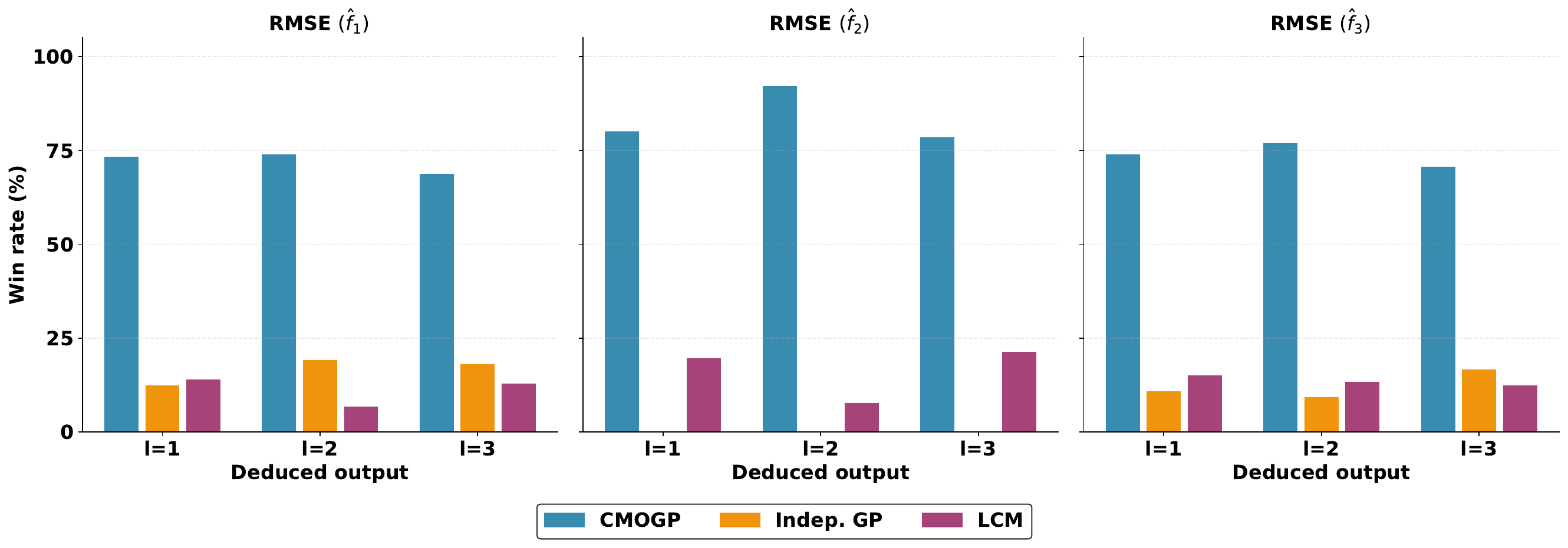}
    \caption{$N = 100$}
  \end{subfigure}
  \caption{Win rate (\%) of each method on the RRMSE of each output ($f_1$, $f_2$, $f_3$) for $N \in \{20, 50, 100\}$, across all deduction scenarios ($k_* \in \{1, 2, 3\}$). The highlighted column indicates the scenario where the evaluated output is the deduced one. The CMOGP (blue) is the same model in all scenarios. Results are aggregated over $200$ independent replications.}
  \label{fig:winrate_all}
\end{figure}
\clearpage

\subsection{Predictive intervals}\label{app:interval_len}

Figure~\ref{fig:interval_len_F2} shows the distribution of the average $90\%$ predictive interval length for $f_2$ across all deduction scenarios. For the deductive
methods, predictive intervals on the deduced output are obtained by propagating
the posterior samples of the modeled outputs through \eqref{eq:deduction}.
\begin{figure}
  \centering
  \includegraphics[width=\textwidth]{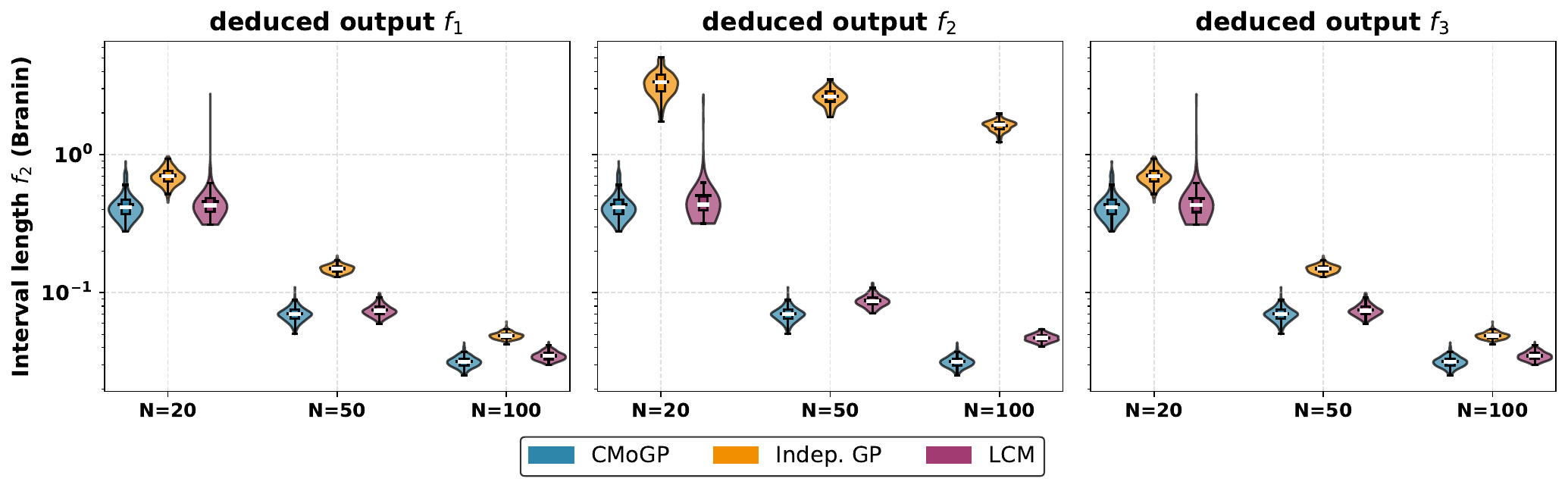}
  \caption{Distribution of the average $90\%$ predictive interval length for $f_2$ across 200
replications, for training set sizes $N \in {20, 50, 100}$ and the three deduction scenarios. All methods
achieve comparable empirical coverage rates. Shorter intervals thus indicate more informative predictions. The CMOGP maintains stable and shorter intervals across
all scenarios.}
\label{fig:interval_len_F2}
\end{figure}

\section{Complete RRMSE results for the Lotka--Volterra experiment}\label{app:lv_results}
Figure~\ref{fig:lv_all_outputs} displays the RRMSE as a function of the latent dimension $m$ for all four outputs ($p$, $q$, $r$, $s$) and the three training set sizes ($N \in \{30, 15, 10\}$), comparing \texttt{Row-CMO} against \texttt{Col-Indep} and \texttt{Fw-Indep} with all deduction scenarios. The patterns discussed in Section~\ref{sec:lv} on the output $p$ are consistently observed across all outputs: \texttt{Row-CMO} starts above the alternatives at small $m$ and surpasses them at moderate-to-high latent dimensions, with the gap widening as $N$ decreases.
 
\begin{figure}
  \centering
  \includegraphics[width=\textwidth]{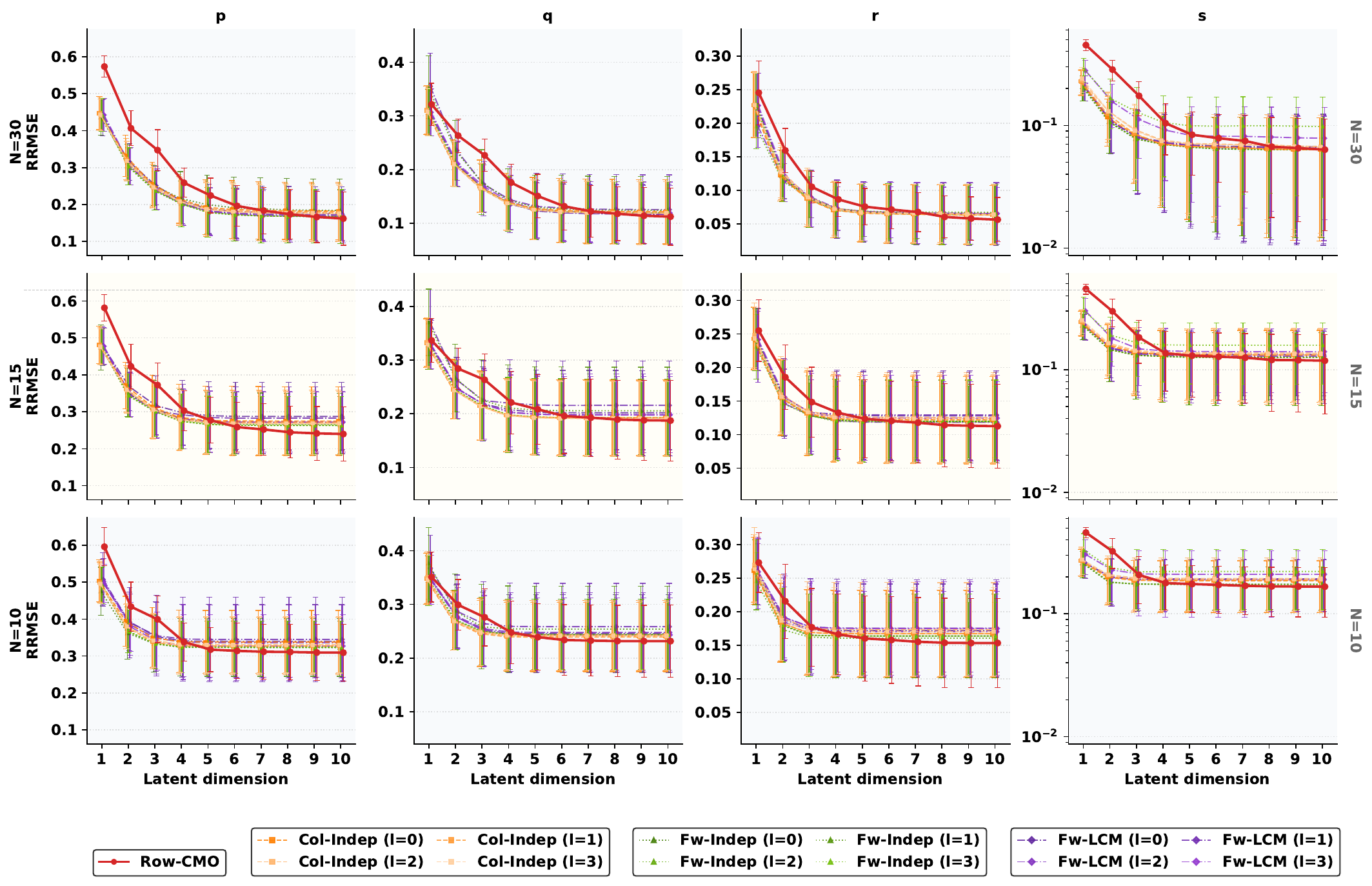}
  \caption{RRMSE as a function of $m$ for all four Lotka--Volterra outputs ($p$, $q$, $r$, $s$, columns) and training set sizes ($N = 30, 15, 10$, rows). \texttt{Row-CMO} (red) is compared against \texttt{Col-Indep} and \texttt{Fw-Indep} with all deduction scenarios $k_* \in \{0, 1, 2, 3\}$. Error bars represent $\pm 1$ standard deviation over $10$ seeds.}
  \label{fig:lv_all_outputs}
\end{figure}
\clearpage


\bibliographystyle{cas-model2-names}

\bibliography{cas-refs}



\end{document}